    \let\Cref\crtCref
    \let\cref\crtcref
\newcommand{\W}{\mathcal{W}}
\newcommand{\E}{\mathop{\mathbb{E}}}
\newcommand{\bP}{\mathop{\mathbb{P}}}
\newcommand{\dotp}{\boldsymbol{\cdot}}
\newcommand{\wgd}{w^{S}}
\newcommand{\wgdprime}{w^{S'}}
\newcommand{\ugd}{w^{D}}
\newcommand{\wgdbar}{\bar{w}^{S}}
\newcommand{\ugdbar}{\bar{w}^{D}}
\newcommand{\R}{\mathbb{R}}
\newcommand{\Rad}{\mathcal{R}}
\newcommand{\A}{\mathcal{A}}
\newcommand{\Z}{\mathcal{Z}}
\newcommand{\X}{\mathcal{X}}
\newcommand{\Y}{\mathcal{Y}}
\newcommand{\ignore}[1]{}
\declaretheorem[parent=section]{theorem}
\declaretheorem[sibling=theorem]{lemma}
\newtheorem*{theorem*}{Theorem}
\declaretheoremstyle[
        spaceabove=\topsep, 
        spacebelow=\topsep, 
        bodyfont=\normalfont,
        notefont=\normalfont\bfseries,
        notebraces={}{},
        qed=$\blacksquare$, 
    ]{proofstyle}
\declaretheorem[style=proofstyle,numbered=no,name=Proof]{proof}
\newtheorem*{claim}{Claim}
\crefname{claim}{Claim}{Claims}
\title{Thinking Outside the Ball: Optimal Learning with Gradient Descent for Generalized Linear Stochastic Convex Optimization}
\author{
    Idan Amir\thanks{Department of Electrical Engineering, Tel Aviv University; \texttt{idanamir@mail.tau.ac.il}.}
    \and Roi Livni\thanks{Department of Electrical Engineering, Tel Aviv University; \texttt{rlivni@tauex.tau.ac.il}.}
    \and Nathan Srebro \thanks{Toyota Technological Institute at Chicago; \texttt{nati@ttic.edu}.}
}
\date{\today}
\begin{document}
\maketitle
\begin{abstract}
We consider linear prediction with a convex Lipschitz loss, or more generally, stochastic convex optimization problems of generalized linear form, i.e.~where each instantaneous loss is a scalar convex function of a linear function.  We show that in this setting, early stopped Gradient Descent (GD), without any explicit regularization or projection, ensures excess error at most $\varepsilon$ (compared to the best possible with unit Euclidean norm) with an optimal, up to logarithmic factors, sample complexity of $\tilde{O}(1/\varepsilon^2)$ and only $\tilde{O}(1/\varepsilon^2)$ iterations.  This contrasts with general stochastic convex optimization, where $\Omega(1/\varepsilon^4)$ iterations are needed \citet{amir21gd}. The lower iteration complexity is ensured by leveraging uniform convergence rather than stability.  But instead of uniform convergence in a norm ball, which we show can guarantee suboptimal learning using $\Theta(1/\varepsilon^4)$ samples, we rely on uniform convergence in a distribution-dependent ball.
\end{abstract}
\section{Introduction} \label{sec:intro}

The recent success of learning using deep networks, with many more parameters than training points, and even without any explicit regularization, has brought back interest in how biases and ``implicit regularization'' due to the optimization method used, can ensure good generalization, even in situations where minimizing the optimization objective itself cannot \cite{neyshabur2014search}.  Alongside interest in understanding the algorithmic biases of optimization in non-convex, deep models, and how they can yield good generalization \cite[e.g.][]{gunasekar2018b-implicit,gunasekar2018a-implicit,li2018algorithmic,nacson2019lexicographic,arora2019implicit,lyu2019gradient,woodworth2020kernel,moroshko2020implicit,chizat2020implicit,blanc2020implicit,haochen2021shape,razin2021implicit,pesme2021implicit}, there has also been renewed interest in understanding the fundamentals of algorithmic regularization in convex models \citep{shalev2009stochastic, feldman2016generalization, amir21full,amir21gd, sekhari2021sgd,dauber2020can}, both as an interesting and important setting in its own right, and even more so as a basis for understanding the situation in more complex models (how can we hope to understand phenomena in deep, non-convex models, if we can't even understand them in convex models?).  In particular, these fundamental questions include the relationship between algorithmic regularization, explicit regularization, uniform convergence and stability; and the importance of stochasticity and early stopping.

In this paper we focus on the algorithmic bias of {\em deterministic} (full batch) optimization methods, and in particular of full-batch gradient descent (i.e.~gradient descent on the empirical risk, or ``training error'').  Even when gradient descent (GD) is run to convergence, it affords some bias that can ensure generalization.  E.g.~in linear regression (and even slightly more general settings) where interpolation is possible, it can be shown to converge to the minimum norm interpolating solution.  This can be sufficient for generalization even in underdetermined situations where other interpolators (i.e.~other minimizers of the optimization objective) would not generalize well.  Indeed, the generalization ability of the minimum norm interpolating solution, and hence of GD, even in noisy situations, is the subject of much study.  And in parallel, there has also been work going beyond GD on linear regression, characterizing the limit point of GD, or of other optimization methods such as Mirror Descent, steepest descent w.r.t.~a norm, coordinate descent and AdaBoost, for different types of loss functions \citep{telgarsky2013margins,soudry2018implicit,gunasekar2018c-implicit,ji2019implicit,li2019implicit,ji2020gradient,shamir20,telgarsky20,VaskeviciusKR20}.

But what about situations where interpolation, or completely minimizing the empirical risk, is not desirable, and generalization requires compromising on the empirical risk?  In this cases one can consider early stopping, i.e.~running GD only for  some specific number of iterations.  Indeed, early stopped GD is a common regularization approach in practice, and other learning approaches, most prominently Boosting, can also be viewed as early stopping of an optimization procedure (coordinate descent in the case of Boosting).  When and how can such early stopped GD allow for generalization?  How does this compare to Stochastic Gradient Descent (SGD), or to using explicit regularization, both in terms of generalization ability, and the number of required optimization iterations?  And what tools, such as uniform convergence, distribution-dependent uniform convergence, and stability, are appropriate for studying the generalization ability of GD?

\paragraph{Our main result} We will show that when training a linear predictor with a convex Lipschitz loss (or more generally, for stochastic convex optimization with a generalized linear instantaneous objective), GD with early stopping {\em can} generalize optimally, up to logarithmic factors, with optimal sample complexity $\tilde{O}(1/\varepsilon^2)$, {\em and} with an optimal number $\tilde{O}(1/\varepsilon^2)$ of iterations (the same as stochastic gradient descent), even without any explicit regularization, and in particular {\em without} projections into a norm ball---just unconstrained GD on the training error.  This contrasts to previous results regarding early stopped GD for arbitrary stochastic convex optimization (not necessarily generalized linear), for which \citet{amir21gd} showed $\Theta(1/\varepsilon^4)$ GD iterations are needed (even if projections or explicit regularization is used).

\paragraph{Stability vs Uniform Convergence for SGD and GD} The important difference here, and the only property of generalized linear models (GLMs) that we rely on, is that GLMs satisfy uniform convergence \citet*{bartlett02}: empirical losses converge to their expectations uniformly over all predictors in Euclidean balls. This is in contrast to general Stochastic Convex Optimization (SCO), for which no such uniform convergence is possible \citep{shalev2009stochastic}. Consequently, rather then uniform convergence, the analysis for SCO is based on stability arguments \citet{bassily2020stability}.  For SGD, stability at each step can be used in an online analysis, combined with an online-to-batch conversion, which is sufficient for ensuring optimal generalization with an optimal $O(1/\varepsilon^2)$ number of iterations.  But for GD, we must consider the stability of the method as a whole, which leads to optimal rates in the case of smooth losses \citet{hardt2016train} but otherwise is much worse, necessitating using a smaller stepsize, and hence quadratically more iterations---\cite{amir21gd} showed that this is not just an analysis artifact, but a real limitation of GD for general SCO.  In this paper, we show that once generalization can be ensured via uniform convergence, e.g.~for GLMs, then GD does not have to worry about stability, can take much larger step sizes, and generalize optimally after only $\tilde{O}(1/\varepsilon^2)$ iterations.  

But we must be careful with how we ensure uniform convergence! To rely on uniform convergence, we need to ensure the output of GD lies in some ball, and the generalization error would then scale with the radius of the ball.  A na\"ive approach would be to ensure that output of GD lies in a norm ball around the origin, and rely on uniform convergence in this ball.  In \cref{sec:comparison} we show that this approach can ensure generalization, but only with suboptimal sample complexity of $O(1/\varepsilon^4)$---that is, worse than with the stability-based approach.  Instead, we show that, with high probability, the output of GD lies in a small (constant radius) {\em distribution dependent} ball, centered not at the origin, but around the (distribution dependent) output of GD on the population objective.  Even though this ball is unknown to the algorithm, this is still sufficient for generalization.  The situation here is similar to the notion of algorithm dependent uniform convergence introduced by \cite{nagarajan2019uniform}, though we should emphasize that here we show that algorithm-dependent uniform convergence {\em is} sufficient for optimal generalization.

\paragraph{Context and Insights} Our results complement recent results exposing gaps between generalization using stochastic optimization versus explicit regularization or deterministic optimization in stochastic convex optimization.  \cite{sekhari2021sgd} showed that for a setting that is slightly weaker (where only the population loss is convex, but instantaneous losses can be non-convex), there can be large gaps between SGD versus GD or explicit regularization: even though SGD can learn with the optimal sample complexity of $O(1/\varepsilon^2)$, explicit regularization, in the form of regularized empirical risk minimization cannot learn at all, even with arbitrarily many samples, and GD requires at least a suboptimal $\Omega(1/\varepsilon^{2.4})$ number of samples (it is not clear if this sample complexity is sufficient for GD, or even if it can at all ensure learning).  This highlights that the generalization ability of SGD {\em cannot} be understood in terms of mimicking some regularizer, as well as gaps between stochastic and deterministic optimization. 

Returning to the strict SCO setting, where the instantaneous losses are convex, still, uniform convergence does not hold, and generalization can only be ensured via algorithmic-dependent bounds. Nevertheless, optimal generalization can be ensured either thorough explicit regularization, SGD or GD---all three approaches can ensure learning with $\Theta(1/\varepsilon^2)$ samples \citep{shalev2009stochastic, nemirovski1983problem, bassily2020stability}. Even so, differences between deterministic and stochastic methods still exist. \cite{dauber2020can} shows that in SCO, the output of SGD cannot even be guaranteed to lie in some ``small" {\em distribution-dependent} set of (approximate) empirical risk minimizers\footnote{``smallness" here, refers to a size notion that measures how well an empirical risk minimizer over the set generalizes, see \cite{dauber2020can} for the definition of a \emph{statistically complex set}}, and so the generalization ability of SGD in this settings {\em cannot} be attributed to any ``regularizer'' being small. \citeauthor{dauber2020can} also shows that GD does not follow some \emph{distribution-independent} regularization path. In contrast, here we show that if we do take the distribution into account, then GD is constrained to follow (at least approximately) a predetermined path (i.e.~deterministic path, that is independent of the sample, but does depend only on the distribution). Finally, there is also a gap between SGD and GD, in this case, in the required number of iterations \cite{amir21gd}. Surprisingly, this gap cannot be fixed by adding regularization to the objective \cite{amir21full}.  
 
Importantly, for either weak or strict SCO, regularization in the form of constraining the norm of the predictor (i.e.~empirical risk minimization inside the hypothesis class we are competing with), is {\em not} sufficient for learning \citep{shalev2009stochastic,feldman2016generalization}. The failure of constrained ERM is critical here for the constructions of \citeauthor{amir21gd, dauber2020can} and \citeauthor{sekhari2021sgd} establishing the gaps above: since projected gradient descent would quickly converge to the constrained ERM, it would also generalize just as well, and the gaps and constructions are valid also for projected gradient descent.  
 
In this paper we turn to the GLM setting, which is perhaps one of the most well studied frameworks in the learning theory literature, as it captures fundamental problems such as logistic regression, SVM and many more. Uniform convergence for GLMs \citep{bartlett02, shalev2014understanding, kakade08} ensures that constrained ERM learns with optimal sample complexity, and hence so would {\em projected} GD with $O(1/\varepsilon^2)$ iterations. Interestingly, GD on the Tikhonov-type regularized objective similarly yields optimal learning with only $O(1/\varepsilon^2)$ iterations \cite{sridharan2008fast}, indicating the lower bounds of \citeauthor{amir21full} on the iteration complexity do not hold in this setting.  But both projected GD and GD on the Tikhonov regularized objective are forms of explicit regularization, and so we study unregularized, {\em unprojected}, GD.  Our results show that (a) when uniform convergence holds, both sample complexity and iteration complexity gaps between stochastic and deterministic optimization disappear (though stochastic optimization is of course still much more computationally efficient), and (b) perhaps surprisingly, {\em distribution-dependent} uniform convergence is not only able to explain generalization of the output of GD, but it can do so better than stability; (c) though it appears to be suboptimal, distribution independent uniform convergence can explain generalization even though the norm could become very large (by making explanations based on uniform convergence inside a fixed distribution independent class).

\section{Problem Setup and Background}
We study the problem of stochastic optimization from i.i.d.~data samples. For that purpose, we consider the standard setting of stochastic convex optimization. A learning problem consists of a family of loss functions $f:\W\times\Z\rightarrow\R$ defined over a fixed domain $\W\subseteq\R^d$, parameterized by the parameter space $\Z$. For each $B$ we denote the domain \[\W^B=\brk[c]{w\in \R^d:\norm{w}\leq B}.\] 

Our underlying assumption is that for each $z\in\Z$ the function $f(w;z)$ is convex and $L$-Lipschitz with respect to its first argument $w$. In this setting, a learner is provided with a sample $S=\brk[c]{z_1,\ldots,z_n}$ of i.i.d.~examples drawn from an unknown distribution $D$. The goal of the learner is to optimize the \textit{population risk} defined as follows:
\begin{align*}
    F_D(w)
    \coloneqq
    \E_{z\sim D}\brk[s]{f(w;z)}
    .
\end{align*}
More formally, an algorithm $\A$ is said to learn the class $\W$ (in expectation), with sample complexity $m(\varepsilon)$ if given i.i.d.~sample $S=\{z_1,\ldots, z_n\}$ such that $n\ge m(\varepsilon)$, the learner returns a solution $\A(S)$ that holds
\[ \E_{S\sim D^n}\left[F_D(\A(S))\right]\le \min_{w\in \W}F_D(w) + \varepsilon.\]
A prevalent approach in stochastic optimization, is to consider, and optimize, the \textit{empirical risk} over a sample $S$:
\begin{align}\label{eq:empirical_risk}
    F_S(w)
    =
    \frac{1}{n}\sum_{i=1}^nf(w;z_i)
    .
\end{align}

\subsection{Gradient Descent (without projections).}
A concrete and prominent way in minimizing the empirical risk in \cref{eq:empirical_risk} is with \textit{Gradient Descent} (GD). GD is an iterative algorithm that runs for $T$ iterations, and has the following update rule that depends on a \emph{learning-rate} $\eta$:
\begin{align}\label{eq:gd_update}
    \wgd_{t+1}
    =
    \wgd_t-\eta\nabla F_S(\wgd_t)
    ,
\end{align}
where $\wgd_0=0$, and $\nabla F_S(\wgd_t)$ is the (sub)gradient of $F_S$ at point $\wgd_t$. The output of GD is normally taken to be the average iterate,
\begin{align}\label{eq:gd_average}
    \wgdbar
    =
    \frac{1}{T}\sum_{t=1}^T\wgd_t
    .
\end{align}
Throughout the paper we consider the aforementioned output in \cref{eq:gd_average}, though we remark that our results also extend to any reasonable averaging scheme (including prevalent schemes such as tail-averaging or random choice of $w_t$).

It is well known  (e.g. \cite{shalev2014understanding}) that the output of GD with standard initialization at the origin, enjoys the following guarantee over the \emph{empirical risk}, for every $B$:
\begin{align}\label{eq:gd_optimization_bound}
    F_S(\bar{w}^S)
    \leq \min_{w\in\W^B} F_S(w) +
    \eta L^2 +\frac{B^2}{\eta T}
    .
\end{align}
In contrast, as far as the population risk, it was recently shown in \cite{amir21gd} that, for sufficiently large $d$, GD suffers from the following sub optimal rate\footnote{the result in \cite{amir21gd} is formulated for projected-GD, however as the authors note the proof holds verbatim to the unprojected version.}

\begin{equation}\label{eq:lb} 
\E_{S\sim D^n}[F_D(\bar w^S)]\ge \min_{w\in\W^1} F_D(w)+ \Omega\brk2{\min\brk[c]1{\eta \sqrt{T} + \frac{1}{\eta T},1}}.
\end{equation}
In particular, setting for concreteness $B=1$, a choice of $\eta=1/\sqrt{T}$ and $T=O(1/\varepsilon^2)$ may lead to $\varepsilon$-error over the empirical risk, but is susceptible to overfitting. In fact, it turns out that  $T=O(1/\varepsilon^4)$ iterations are necessary and sufficient \citep{bassily2020stability} for GD (with or without projections) to obtain $O(\varepsilon)$ population error.

\subsection{Generalized Linear Models.}
One, important, class of convex learning problems is the class of \emph{generalized linear models} (GLMs). In a GLM problem, $f(w;z)$ takes the following generalized linear form:
\begin{align} \label{eq:glm_form}
    f(w;z)
    =
    \ell(w\dotp \phi\brk{x},y)
    ,
\end{align}
where $\Z=\X\times\Y$, $\ell(a,y)$ is convex and $L$-Lipschitz w.r.t.~$a$, and, $\phi:\X\to H$ is an embedding of the domain $\X$ into some norm bounded set in a linear space (potentially infinite).

As we mostly care about norm bounded solutions, we will assume for concretness that $\norm{\phi(x)}\leq 1$. We also treat the value of the function at zero as constant, namely $|\ell(0,y)|\le O(1)$. In turn, the function $f$ is $L$-Lipschitz as well, and we obtain by Lipschitness that:
\begin{equation}\label{eq:bound_on_ell} 
    |f(w,z)|=|\ell(w\dotp \phi(x),y)|\le O(\|w\|L).
\end{equation}

\paragraph{Uniform Convergence of GLMs.}
One desirable property of GLMs is that, in contrast with general convex functions, they enjoy \emph{dimension-independent uniform convergence bounds}. This, potentially, allow us to  circumvent the bound in \cref{eq:lb}.
In more detail, a seminal result due to \citet{bartlett02} shows that, under our restrictions on $\ell$ and $\phi$, the empirical risk $F_S(w)$ converges uniformly to the population loss $F_D(w)$ as follows for any $B$:
\begin{align}\label{eq:gen_bounded_glm}
    \E_{S\sim D^n}\brk[s]1{\sup_{w\in\W^B}\brk[c]1{F_D(w)-F_S(w)}}
    \leq
    \frac{2LB}{\sqrt{n}}
    .
\end{align}
By a similar reasoning, one can show a stronger property for GLMs, which we will need: uniform convergence on \emph{any} ball, not necessarily centered around zero. More formally, for every $u$ and $K$ let us notate \[\W^K_u:=\brk[c]{w:\norm{w-u}\leq K}.\] 
We can then bound the expected generalization error of $w\in \W^K_u$ as follows:
\begin{lemma}\label{lem:genwku}
Suppose $\ell(a,y)$ is $L$-Lipschitz w.r.t.~$a$ and $\phi:\X\to H$ is an embedding in a linear space $H$ such that $\|\phi(x)\|\le 1$. Then, for $f$ as in \cref{eq:glm_form}. We have for any $u$ and $K$
   \[ \E_{S\sim D^n}\brk[s]1{\sup_{w\in  \W_u^K}\brk[c]1{F_D(w)-F_S(w)}}
    \leq
    \frac{2LK}{\sqrt{n}}\]
\end{lemma}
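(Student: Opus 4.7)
The plan is to reduce \cref{lem:genwku} to the already-stated origin-centered bound \cref{eq:gen_bounded_glm} by a change of variables. Writing any $w\in\W_u^K$ as $w=u+v$ with $v\in\W^K$, we have $F_D(w)-F_S(w)=F_D(u+v)-F_S(u+v)$, so it suffices to bound the expected supremum of the shifted gap over $v\in\W^K$. The key observation is that the shifted problem is itself a GLM with the same Lipschitz constant $L$ and the same embedding $\phi$.

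Concretely, define the auxiliary loss
\[
    \tilde{f}(v;z) \coloneqq f(u+v;z) = \ell\brk{v\dotp\phi(x) + u\dotp\phi(x),\, y},
\]
and introduce the augmented label $\tilde{y}\coloneqq(y,\,u\dotp\phi(x))$ together with the loss $\tilde{\ell}(a,\tilde{y})\coloneqq\ell(a+u\dotp\phi(x),y)$. Since $\tilde{\ell}$ is merely a translation of $\ell$ in its first argument, it remains convex and $L$-Lipschitz in $a$, and of course $\|\phi(x)\|\le 1$ is unaffected. Hence $\tilde{f}(v;z)=\tilde{\ell}(v\dotp\phi(x),\tilde{y})$ is a GLM in the sense of \cref{eq:glm_form} satisfying all the hypotheses that led to \cref{eq:gen_bounded_glm}.

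Applying \cref{eq:gen_bounded_glm} to $\tilde{f}$ with the origin-centered ball $\W^K$ gives
\[
    \E_{S\sim D^n}\brk[s]1{\sup_{v\in\W^K}\brk[c]1{\tilde{F}_D(v)-\tilde{F}_S(v)}} \le \frac{2LK}{\sqrt{n}},
\]
and reverting the substitution $w=u+v$ yields exactly the claim of \cref{lem:genwku}. The only real content is the observation that translating the argument of a GLM preserves the GLM structure and its Lipschitz constant; once this is noted, the bound is immediate. As a sanity check, the same answer falls out of a direct Rademacher argument: by symmetrization and the Ledoux–Talagrand contraction, the expected supremum is controlled up to constants by $\E\sup_{w\in\W_u^K}\frac{1}{n}\sum_i\sigma_i w\dotp\phi(x_i)$, and the $u$-contribution $\frac{1}{n}\sum_i\sigma_i u\dotp\phi(x_i)$ is deterministic in $u$ and has mean zero under the Rademacher signs, so only the $v\in\W^K$ part survives and is bounded by $LK/\sqrt{n}$.
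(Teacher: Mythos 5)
Your proof is correct and takes essentially the same approach as the paper: the appendix proof also recenters at $u$ and absorbs the offset $u\dotp\phi(x_i)$ into the per-example loss $\rho_i(\alpha)=\ell(\alpha+u\dotp\phi(x_i),y_i)$, which is exactly your observation that translation preserves the GLM structure and the Lipschitz constant, before applying the contraction lemma and the linear-class Rademacher bound. The only cosmetic difference is that you invoke the origin-centered bound \cref{eq:gen_bounded_glm} as a black box for the translated problem, whereas the paper unfolds the symmetrization and contraction steps explicitly.
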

The proof is essentially the same as the proof of \cref{eq:gen_bounded_glm}, and exploit the Rademacher complexity of the class. For the sake of completeness we repeat it in \cref{apx:genwku}.

\cref{eq:gen_bounded_glm}, and more generally \cref{lem:genwku}, imply that \emph{any} algorithm, $\A$ whose range is restricted to a $B$-bounded ball, that has an optimization guarantee of,
\begin{align*}
    F_S(\A(S))-\min_{w\in\W^B}F_S(w)
    \leq
    O\brk2{\frac{LB}{\sqrt{n}}}
    ,
\end{align*}
will also obtain an $O\brk{LB/\sqrt{n}}$ convergence rate of the excess population risk. For example, projected-GD is an algorithm that enjoys the aforementioned generalization bound.

\section{Main Results} \label{sec:main_results}

\begin{theorem}\label{thm:gd_excess_risk_ub}
Suppose $D$ is a distribution over $\Z=\X\times\Y$. Let $f:\R^d\times\Z\rightarrow\R$ be a loss function of the form given in \cref{eq:glm_form} such that for all $y\in\Y$, $\ell(\cdot,y)$ is convex and $L$-Lipschitz with respect to its first argument and such that for all $x\in\X: \norm{\phi(x)}\leq 1$. Then, for the gradient descent solution in \cref{eq:gd_average} and for any $B\in\R$ we have:
\begin{align*}
    \E_{S\sim D^n}\brk[s]1{F_D(\wgdbar)}-\min_{w\in\W^B}F_D(w)
    \leq
    \tilde{O}\brk3{\frac{\eta L^2 T}{n}+\frac{\eta L^2 \sqrt{T}}{\sqrt{n}} + \eta L^2 + \frac{B^2}{\eta T}}
    .
\end{align*}
In particular, setting $\eta=O(1/(L\sqrt{T}))$ and $T=n$ we get,
\begin{align*}
    \E_{S\sim D^n}\brk[s]1{F_D(\wgdbar)}
    \leq
    \inf_{w\in\R^d}\brk[c]2{F_D(w)
    +\tilde{O}\brk2{\frac{L(\|w\|^2+1)}{\sqrt{n}}}}
    .
\end{align*}
\end{theorem}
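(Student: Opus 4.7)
My plan is to decompose the excess risk and bound each term separately. Let $w^* \in \argmin_{w \in \W^B} F_D(w)$, and write
\begin{align*}
\E_S\brk[s]1{F_D(\wgdbar) - F_D(w^*)}
= \E_S\brk[s]1{F_D(\wgdbar) - F_S(\wgdbar)} + \E_S\brk[s]1{F_S(\wgdbar) - F_S(w^*)},
\end{align*}
using that $\E_S[F_S(w^*) - F_D(w^*)] = 0$ because $w^*$ is deterministic. The optimization term is handled directly by the standard GD guarantee \eqref{eq:gd_optimization_bound} applied with the competitor $w^* \in \W^B$, giving the $\eta L^2 + B^2/(\eta T)$ contribution. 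All the work is in the generalization term, which is where distribution-dependent uniform convergence enters.

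To that end, I introduce the deterministic reference trajectory $\ugd_{t+1} := \ugd_t - \eta \nabla F_D(\ugd_t)$ with $\ugd_0 = 0$ (``population GD''), and its average $\ugdbar$. Define $\Delta_t := \wgd_t - \ugd_t$ and $\xi_t := \nabla F_S(\ugd_t) - \nabla F_D(\ugd_t)$. Expanding $\|\Delta_{t+1}\|^2$ and applying convexity of $F_S$ to drop the non-negative monotonicity term $\langle \nabla F_S(\wgd_t) - \nabla F_S(\ugd_t), \Delta_t\rangle$, I obtain the recursion
\begin{align*}
\|\Delta_{t+1}\|^2 \leq \|\Delta_t\|^2 + 2\eta \|\xi_t\|\,\|\Delta_t\| + O(\eta^2 L^2).
\end{align*}
Crucially, since $\ugd_t$ is deterministic, $\xi_t$ is a mean-zero average of $n$ i.i.d.\ vectors of norm at most $2L$, so $\E\|\xi_t\|^2 \leq 4L^2/n$. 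Taking expectations, decoupling the cross term by Cauchy--Schwarz into $2\eta\sqrt{\E\|\xi_t\|^2\,\E\|\Delta_t\|^2}$, and solving the resulting recursion by induction with the ansatz $\sqrt{\E\|\Delta_t\|^2} \leq c\,\eta L(\sqrt{t} + t/\sqrt{n})$, I get
\begin{align*}
\E\|\wgdbar - \ugdbar\|^2
\;\leq\; \tilde{O}\brk1{\eta^2 L^2 (T + T^2/n)}
\end{align*}
(the averaged bound follows from Jensen, $\|\bar\Delta\|^2 \leq \tfrac{1}{T}\sum_t\|\Delta_t\|^2$).

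To convert this norm deviation into a generalization bound, I apply \cref{lem:genwku} with center $u = \ugdbar$ (which is deterministic) and a radius that tracks $\|\wgdbar - \ugdbar\|$. Because the Rademacher-type guarantee in the lemma scales linearly in the radius, a Cauchy--Schwarz argument (or, alternatively, a Markov-style splitting on the event $\{\|\wgdbar - \ugdbar\| \leq K^\star\}$, using the crude envelope $|f(w,z)| = O(L\|w\|)$ from \eqref{eq:bound_on_ell} to control the tail contribution) yields
\begin{align*}
\E\brk[s]1{F_D(\wgdbar) - F_S(\wgdbar)}
\;\leq\; \tilde{O}\brk3{\frac{L \sqrt{\E\|\wgdbar - \ugdbar\|^2}}{\sqrt{n}}}
\;=\; \tilde{O}\brk3{\eta L^2 \sqrt{T/n} + \frac{\eta L^2 T}{n}}.
\end{align*}
Adding the optimization term recovers the first displayed bound; the ``in particular'' clause follows by substituting $\eta = 1/(L\sqrt{T})$, $T = n$, and taking the infimum over $B = \|w\|$.

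The main obstacle is the deviation-tracking step: the noise $\xi_t$ and the deviation $\Delta_t$ both depend on the full sample $S$, so the increments are not martingale differences with clean independence, and, unlike in stability-style analyses, I cannot afford to use the trivial $\|\xi_t\| \leq 2L$ bound (which only recovers the $\tilde O(\eta L\sqrt{T})$ stability rate and the suboptimal iteration complexity of \citet{amir21gd}). Resolving this through Cauchy--Schwarz coupled with the variance bound $\E\|\xi_t\|^2 \leq 4L^2/n$ is just sharp enough to expose the $\sqrt{T/n}$ and $T/n$ scalings that appear in the theorem; the quadratic recursion then needs a somewhat delicate induction, since the dominant term changes between the regimes $T \leq n$ and $T \geq n$.
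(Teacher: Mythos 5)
Your overall architecture matches the paper's: the same decomposition into an optimization term (handled by \cref{eq:gd_optimization_bound}) plus a generalization term, the same population-GD reference trajectory $\ugd_t$, and the same recursion for $\norm{\wgd_t-\ugd_t}^2$ obtained by dropping the monotonicity term and using $\E\norm{\nabla F_S(\ugd_t)-\nabla F_D(\ugd_t)}^2\le 4L^2/n$ (this is exactly the paper's \cref{lem:gd_iterates_exp}). The gap is in the last step, where you convert the \emph{in-expectation} deviation bound into a generalization bound via \cref{lem:genwku}. That lemma requires a \emph{deterministic} radius $K$, and neither of your two proposed devices supplies one at an acceptable cost. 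With only a second-moment bound on $\norm{\wgdbar-\ugdbar}$, Markov gives $\bP(\norm{\wgdbar-\ugdbar}>K^\star)\le \E\norm{\wgdbar-\ugdbar}^2/(K^\star)^2$, i.e.\ polynomial tail decay. On the complementary event the only available control is the crude envelope $|F_D(\wgdbar)-F_S(\wgdbar)|\le O(\eta L^2 T)$, which at the target parameters $\eta=1/(L\sqrt{T})$, $T=n$ equals $L\sqrt{n}$; killing that contribution requires failure probability $\delta\lesssim 1/n$, which under Markov forces $K^\star\gtrsim \sqrt{n}\cdot\sqrt{\E\norm{\wgdbar-\ugdbar}^2}=\Omega(\sqrt{n})$, and then the main term $LK^\star/\sqrt{n}$ is $\Omega(L)$ --- vacuous. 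A direct Cauchy--Schwarz (or peeling) route fares no better, because you would need second-moment control of the random variable $\sup_{w\in\W^K_u}\{F_D(w)-F_S(w)\}$, and its fluctuations are governed by the range of the loss on the ball, which is again $O(L\cdot\eta LT)=O(L\sqrt{n})$.

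The paper's resolution, which your write-up does not contain, is to prove a \emph{high-probability} version of the trajectory-proximity bound (\cref{thm:gd_iterates}): since each $\ugd_t$ is deterministic, $\norm{\nabla F_S(\ugd_t)-\nabla F_D(\ugd_t)}$ is a function of $n$ i.i.d.\ bounded vectors and concentrates with sub-Gaussian tails (McDiarmid / bounded differences), so after a union bound over $t$ one gets, with probability $1-\delta$, the deterministic radius $K=O\brk1{\tfrac{\eta LT}{\sqrt{n}}\sqrt{\log(T/\delta)}+\eta L\sqrt{T}}$. The point is that $\delta$ enters only through $\sqrt{\log(T/\delta)}$, so one can take $\delta=O(1/\sqrt{nT})$ --- small enough that $\delta\cdot O(\eta L^2T)$ is dominated by $\eta L^2\sqrt{T}/\sqrt{n}$ --- while inflating $K$ by only a $\sqrt{\log(nT)}$ factor. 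You should replace your Markov/Cauchy--Schwarz conversion with this concentration step; the rest of your argument then goes through as in the paper.
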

Thus, \cref{thm:gd_excess_risk_ub} shows that using GD, in the case  of GLMs, we can learn the class $\W^B$, with sample complexity $m(\varepsilon)=\tilde O(1/\varepsilon^2)$ and number of iterations $T=\tilde O(1/\varepsilon^2)$.

Note that, an improved rate of $\tilde{O}\brk{L\|w\|/\sqrt{n}}$ can be obtained, if we choose $\eta=O(\|w\|/(L\sqrt{T}))$, but this requires prior knowledge of the bounded domain radius. The bound we formulate above is for a learning rate that is oblivious to the norm of the optimal choice $w$.

The key technical tool in proving \cref{thm:gd_excess_risk_ub} is the following structural result that characterizes the output of GD over the empirical risk, and may be of independent interest:
\begin{theorem}\label{thm:gd_iterates}
Let $S=\{z_1,\ldots, z_n\}$ be an i.i.d sequence drawn from some unknown distribution $D$. Assume that $f:\R^d\times\Z\rightarrow\R$ is a convex and $L$-Lipschitz function, for all $z\in\Z$, with respect to its first argument. Then, given the distribution $D$ there exists a sequence $u_1,\ldots,u_T$ that depends on $D$ (but independent of the sample $S$), such that, for any $\delta\in\brk{0,1}$ with probability of at least $1-\delta$ over $S$, the iterates of gradient descent, as depicted in \cref{eq:gd_update}, satisfy: 
\begin{align*}
    \forall t\in[T]:\quad
    \norm1{\wgd_t-u_t}
    \leq
    O\brk3{\frac{\eta Lt}{\sqrt{n}}\sqrt{\log\brk{T/\delta}} + \eta L\sqrt{t}}
\end{align*}
\end{theorem}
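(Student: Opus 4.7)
The plan is to take $u_t$ to be the iterates of \emph{gradient descent on the population risk}, i.e.~$u_0 = 0$ and $u_{t+1} = u_t - \eta \nabla F_D(u_t)$; these depend only on $D$ and not on $S$. Let $\Delta_t := \wgd_t - u_t$ and $g_t := \nabla F_S(u_t) - \nabla F_D(u_t)$. Since $u_t$ is a deterministic function of the distribution, $g_t$ is an average of $n$ i.i.d.\ zero-mean vectors each of norm at most $2L$ (by Lipschitzness of $f$). A vector-valued concentration inequality (e.g.~a Hilbert-space Hoeffding / Pinelis-type inequality) together with a union bound over $t\in[T]$ yields, with probability at least $1-\delta$,
\[
  \forall t\in[T]:\quad \|g_t\|\le O\brk3{\frac{L\sqrt{\log(T/\delta)}}{\sqrt{n}}}.
\]

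Next I would square the update difference $\Delta_{t+1} = \Delta_t - \eta(\nabla F_S(\wgd_t)-\nabla F_D(u_t))$. Direct expansion gives
\[
  \|\Delta_{t+1}\|^2 = \|\Delta_t\|^2 - 2\eta\, \Delta_t \cdot \brk{\nabla F_S(\wgd_t)-\nabla F_D(u_t)} + \eta^2\|\nabla F_S(\wgd_t)-\nabla F_D(u_t)\|^2,
\]
and the last term is at most $4\eta^2 L^2$ by Lipschitzness. The crucial step is to split the cross term as $\nabla F_S(\wgd_t)-\nabla F_D(u_t) = (\nabla F_S(\wgd_t)-\nabla F_S(u_t)) + g_t$ and invoke \emph{monotonicity of the subgradient of the convex function $F_S$}, which gives $\Delta_t \cdot (\nabla F_S(\wgd_t)-\nabla F_S(u_t))\ge 0$. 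Dropping this non-negative term cancels everything that depends simultaneously on $S$ and on the random iterate $\wgd_t$, leaving the clean recursion
\[
  \|\Delta_{t+1}\|^2 \le \|\Delta_t\|^2 - 2\eta\,\Delta_t\cdot g_t + 4\eta^2 L^2.
\]

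Summing from $0$ to $t-1$, using Cauchy--Schwarz to bound $\Delta_s\cdot g_s \ge -\|\Delta_s\|\|g_s\|$, and setting $M_t := \max_{s\le t}\|\Delta_s\|$ and $G_t := \sum_{s<t}\|g_s\|$, I would obtain the scalar inequality $M_t^2 \le 2\eta M_t G_t + 4\eta^2 L^2 t$. Solving this quadratic in $M_t$ gives $M_t \le 2\eta G_t + 2\eta L\sqrt{t}$, and plugging in the high-probability bound $G_t \le t\cdot O(L\sqrt{\log(T/\delta)/n})$ yields the claimed estimate on $\|\Delta_t\|$ simultaneously for every $t\in[T]$.

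The main obstacle to this plan is precisely what the trick above is designed to resolve: $\wgd_t$ itself depends on $S$, so the ``noise'' $\nabla F_S(\wgd_t)-\nabla F_D(\wgd_t)$ evaluated at the random iterate is \emph{not} an average of fresh i.i.d.\ vectors and does not concentrate at an arbitrary point without additional structure (smoothness, GLM form, or uniform convergence of gradients over a fixed ball). Pivoting through the deterministic trajectory $u_t$ and absorbing all $\wgd_t$-dependent error into the non-negative monotonicity term is what allows us to reduce the whole analysis to a \emph{single} empirical-mean concentration of $g_t$ evaluated at a predetermined, sample-independent point.
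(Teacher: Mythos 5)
Your proposal is correct and follows essentially the same route as the paper: you choose $u_t$ to be the population-GD iterates, use monotonicity of $\nabla F_S$ to discard the sample-dependent cross term, concentrate $\nabla F_S(u_t)-\nabla F_D(u_t)$ at the deterministic points via a bounded-difference/Hoeffding-in-Hilbert-space inequality, and union bound over $t$. The only (immaterial) difference is in unrolling the recursion --- you telescope and solve a quadratic in $M_t=\max_{s\le t}\|\Delta_s\|$, whereas the paper applies $2ab\le ca^2+b^2/c$ with $c=t+1$ and the bound $(1+1/t)^t\le e$ --- both yield the same $O\bigl(\eta L t\sqrt{\log(T/\delta)}/\sqrt{n}+\eta L\sqrt{t}\bigr)$ rate.
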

For example, we can set $\eta=\tilde O(1/\sqrt{T})$ and $T=O(n)$, then with probability $O(1/n)$: \[\norm1{w^S_t-u_t}=O(1).\] As such, for the natural choice of learning rate and iteration complexity, we obtain that the iterates of GD remain at the vicinity of a deterministic trajectory, predetermined by the distribution to be learned. Our following result shows that this bound is tight up to some logarithmic factor. We refer the reader to \cref{prf:dist_lower_bound} for the full proof.
\begin{theorem}\label{thm:dist_lower_bound}
Fix $\eta,L,T$ and $n$. For any sequence $u_1,\ldots,u_T$, independent of the sample $S$, there exists a convex and $L$-Lipschitz function $f:\W\times\Z\rightarrow\R$ and a distribution $D$ over $\Z$, such that, with probability at least $1/20$ over $S$, the iterates of gradient descent, as depicted in \cref{eq:gd_update}, satisfy:
\begin{align*}
    \forall t\in[T]:\quad
    \norm{\wgd_t-u_t}
    &\geq
    \Omega\brk3{\frac{\eta L t}{\sqrt{n}} + \eta L \sqrt{t}}
    .
\end{align*}
\end{theorem}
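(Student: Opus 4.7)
The plan is to construct $f$ and $D$ as a direct sum of two convex, Lipschitz pieces acting on orthogonal blocks $w=(w^{(1)},w^{(2)})$, each responsible for one of the two terms. Because the sequence $u_1,\ldots,u_T$ is fixed before we pick $f$ and $D$, both pieces can depend on it.

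For the $\Omega(\eta L\sqrt t)$ term I would use a Nemirovski-style maximum construction placed in a subspace orthogonal to the $u_t$'s. Working in ambient dimension $2T$ for this block, pick orthonormal vectors $e_1,\ldots,e_T\in\R^{2T}$ spanning a subspace $V$ orthogonal to $\spn(u^{(2)}_1,\ldots,u^{(2)}_T)$, which exists because this span has dimension at most $T$. Set
\[
  f_2(w^{(2)}) = \tfrac{L}{2}\max_{s\in[T]}\bigl(\langle e_s,w^{(2)}\rangle - s\eta L/(4T)\bigr).
\]
A routine induction verifies that at the iterate $w^{(2)}_t$, the maximum is uniquely attained at $s=t$, so GD traces the deterministic trajectory $w^{(2)}_t = -\tfrac{\eta L}{2}\sum_{s<t}e_s\in V$, with $\|w^{(2)}_t\|=\Omega(\eta L\sqrt t)$. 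Orthogonality of $V$ to $u^{(2)}_t$ then gives $\|w^{(2)}_t-u^{(2)}_t\|^2=\|w^{(2)}_t\|^2+\|u^{(2)}_t\|^2=\Omega(\eta^2L^2t)$ deterministically.

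For the $\Omega(\eta Lt/\sqrt n)$ term I would use the high-dimensional Rademacher linear loss $f_1(w^{(1)};z)=\tfrac{L}{2\sqrt T}\langle z,w^{(1)}\rangle$ with $z$ uniform on $\{\pm 1\}^T$. Then GD follows the single ray $w^{(1)}_t=-\tfrac{\eta Lt}{2\sqrt T}\bar z$, where $\bar z=n^{-1}\sum z_i$. The bad event $\|w^{(1)}_t-u^{(1)}_t\|\le c\eta Lt/\sqrt n$ rewrites as $\bar z$ lying in a ball of radius $O(c\sqrt{T/n})$ around a fixed point $v_t\in\R^T$. Approximating the Rademacher average by $N(0,I/n)$, a density-times-volume estimate (Stirling) bounds the mass of each such ball by $(O(c))^T$; a union bound over $t\in[T]$ then leaves failure probability at most $1/20$ once $c$ is below a universal constant, giving $\|w^{(1)}_t-u^{(1)}_t\|\ge\Omega(\eta Lt/\sqrt n)$ simultaneously for all $t$ with probability at least $19/20$.

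Finally, $f(w;z)=f_1(w^{(1)};z)+f_2(w^{(2)})$ is convex and, since its gradient is a direct sum of two $L/2$-norm blocks, is $L$-Lipschitz. GD decouples across the two orthogonal blocks, so Pythagoras gives
\[
  \|\wgd_t-u_t\|^2=\|w^{(1)}_t-u^{(1)}_t\|^2+\|w^{(2)}_t-u^{(2)}_t\|^2\ge\Omega\bigl(\eta^2L^2t^2/n+\eta^2L^2t\bigr),
\]
which yields the claimed bound for all $t$ simultaneously with probability at least $1/20$. The main obstacle is the uniformity in $t$ for block 1: a naive $1$-dimensional Rademacher construction only produces per-$t$ anti-concentration and loses an unwanted $1/T$ factor to the union bound (because any small interval on $\R$ has Gaussian-density mass comparable to $c$). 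The key trick is therefore to move to a $T$-dimensional Rademacher vector so that each bad set becomes a tiny ball in $\R^T$ whose Gaussian mass is exponentially small in $T$, surviving the union bound while keeping the leading $\eta Lt/\sqrt n$ constant clean.
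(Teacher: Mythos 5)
Your proposal is correct in outline but takes a genuinely different route from the paper. The paper never engineers the construction around the given sequence $u_1,\ldots,u_T$: it first lower-bounds $\norm{\wgd_t-\wgdprime_t}$ for two \emph{independent} samples $S,S'$ --- using a one-dimensional linear loss $f(w;z)=Lwz$ plus Berry--Esseen for the $\eta Lt/\sqrt{n}$ term, and the max-of-coordinates construction of Bassily et al.\ for the $\eta L\sqrt{t}$ term, as two separate instances of which it keeps the dominant one --- and then converts this into a bound against an arbitrary fixed sequence via the triangle inequality $\bP\brk1{\norm{\wgd_t-\wgdprime_t}\geq a}\leq 2\bP\brk1{\norm{\wgd_t-u_t}\geq a/2}$, which is where the factor $1/10\to 1/20$ comes from. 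You instead build a single $u$-dependent instance: you dodge $u_t$ by placing the Nemirovski block in an orthogonal complement of $\spn(u^{(2)}_1,\ldots,u^{(2)}_T)$ (yielding the $\eta L\sqrt{t}$ term deterministically) and beat $u_t$ on the linear block via $T$-dimensional anti-concentration. Your approach buys a single instance realizing both terms simultaneously and a probability-one bound for the $\sqrt{t}$ term; the paper's symmetrization buys a construction that is independent of $u$ and of where $u$ lives (no need for $2T$ spare orthogonal dimensions), and it only ever needs one-dimensional anti-concentration.

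Two points need tightening. First, ``approximating the Rademacher average by $N(0,I/n)$'' is not a licensed step for small-ball probabilities; either invoke a local limit theorem or argue directly: $\bar z$ is supported on a lattice of spacing $2/n$ per coordinate with per-point mass at most $\brk1{O(1/\sqrt{n})}^T$, and the ball of radius $O(c\sqrt{T/n})$ contains at most about $\mathrm{vol}(B)\cdot(n/2)^T$ lattice points, which recovers your $(O(c))^T$ bound rigorously (and you must separately check that $\sup_T T\,(O(c))^T\le 1/20$ for $c$ a small universal constant, including small $T$). Second, the block-2 trajectory should be $w^{(2)}_t=-\tfrac{\eta L}{2}\sum_{s\le t}e_s$ rather than $\sum_{s<t}$, otherwise the $t=1$ case degenerates; the induction that the max is uniquely attained at index $t+1$ at iterate $t$ does go through with your offsets, but it is the crux of that block and should be written out.
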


\subsection{High Probability Rates}
We next move to discuss results for learnability with high probability. We first remark that using Markov's inequality and, standard techniques to boost the confidence, One can achieve high proability rates at a logarithmic computational and sample cost in the confidence.

If we want, though, to achieve high probability rates for the algorithm without alterations, the standard approach  requires concentration bounds which normally rely on boundness of the predictor. This, unfortunately is not true when we run GD without projection, as the prediction can be potentially unbounded.

However, under natural structural assumptions that are often met for the types of losses we are usually interested in, we can achieve such boundness by \emph{clipping} procedure which we next describe.

For this section we consider a loss function $\ell(a,y)$, such that $y\in [-b,b]$ and, we assume that:
\begin{align}\label{eq:loss_assumption}
    \ell(a,y)
    \ge
    \begin{cases}
        \ell(|y|,y) & a\ge |y|,\\
        \ell(-|y|,y) & a\le -|y|.
    \end{cases}
    \qquad \textrm{and} \qquad
    \forall a\in[-b,b]:\;
    \abs{\ell(a,y)}
    \leq 
    c
    .
\end{align}
Note that this is a very natural assumption to have in the case of convex surrogates for prediction tasks. For example, this holds for the widely used hinge loss $\ell(a,y)=\max\brk[c]{0,1-ya}$ in binary classification. Observe that under this assumption, if we consider $w\cdot \phi(x)$ as a predictor of $y$, the learner has no incentive to return a prediction that is outside of the interval $[-b,b]$. 

Thus, we define the following mapping $g(a)$
\begin{align}\label{eq:clipping}
    g(a)
    =
    \begin{cases}
        c & a \geq b,\\
        a & a \in [-b,b],\\
        -c & a \leq -b,
    \end{cases}
\end{align}
and consider the \emph{clipped} solution $g(\wgdbar\cdot\phi(x))$ where $\wgdbar$ is the original output of GD defined in \cref{eq:gd_average}. We can now present our high probability result.
\begin{theorem}\label{thm:hp}
Suppose $D$ is a distribution over $\Z=\X\times\Y$ and let $\W^B=\brk[c]1{w\in\R^d:\norm{w}\leq B}$. Let $\ell(a,y)$ be a generalized linear loss function of the form given in \cref{eq:glm_form,eq:loss_assumption} such that for all $y\in\Y$, $\ell(\cdot,y)$ is convex and $L$-Lipschitz with respect to its first argument. Then, for the gradient descent solution in \cref{eq:gd_average} and the function $g(a)$ in \cref{eq:clipping} we have with probability at least $1-\delta$
\begin{align*}
    \E_{(x,y)\sim D}\brk[s]1{\ell\brk1{g\brk1{\wgdbar\dotp\phi(x)},y}}
    &\leq \inf_{w\in\R^d}\brk[c]3{\E_{(x,y)\sim D}\brk[s]1{\ell\brk1{w\dotp\phi(x),y}}
    +\frac{\|w\|^2}{\eta T} +\|w\|L\sqrt{\frac{\log(2/\delta)}{n}}}\\
    &\quad+
    O\brk2{\frac{\eta L^2 T}{n}\sqrt{\log\brk{4T/\delta}}+\frac{\eta L^2 \sqrt{T}}{\sqrt{n}} + \eta L^2 + c\sqrt{\frac{\log\brk{8/\delta}}{n}} 
    }.
\end{align*}
\end{theorem}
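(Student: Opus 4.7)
The plan is to assemble four ingredients on a single high-probability event: the iterate-concentration result of \cref{thm:gd_iterates}, a high-probability uniform convergence bound for the \emph{clipped} loss on a deterministic ball, the standard empirical-risk optimization guarantee of GD, and a pointwise Hoeffding bound at the comparator. First, I would invoke \cref{thm:gd_iterates} with confidence $\delta/4$ to extract a data-independent sequence $u_1,\ldots,u_T$ such that, on an event $\mathcal{E}_1$ of probability at least $1-\delta/4$, every iterate satisfies $\norm{\wgd_t - u_t} \le O\brk1{\eta L t/\sqrt{n}\cdot\sqrt{\log(4T/\delta)} + \eta L\sqrt{t}}$. Averaging over $t$ and using the triangle inequality, $\wgdbar$ lies in the deterministic ball $\W^K_{\bar u}$ of radius $K=O\brk1{\eta L T/\sqrt{n}\cdot\sqrt{\log(4T/\delta)}+\eta L\sqrt{T}}$ about $\bar u = \tfrac{1}{T}\sum_t u_t$.

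Second, I would apply a high-probability uniform convergence bound over $\W^K_{\bar u}$ to the \emph{clipped} loss $(w,z)\mapsto \ell(g(w\dotp\phi(x)),y)$. Since $g$ is $1$-Lipschitz and $\norm{\phi(x)}\le 1$, this composite loss remains $L$-Lipschitz in $w$, so the Rademacher-complexity argument underlying \cref{lem:genwku} yields expected supremum $O(LK/\sqrt{n})$; uniform boundedness by $c$ then lets McDiarmid upgrade this to high probability, so on an event $\mathcal{E}_2$ of probability at least $1-\delta/4$,
\[
\sup_{w\in\W^K_{\bar u}}\brk[c]1{L_D(w)-L_S(w)} \le O\brk2{\frac{LK}{\sqrt{n}} + c\sqrt{\frac{\log(8/\delta)}{n}}},
\]
where $L_D$, $L_S$ denote the population and empirical averages of the clipped loss. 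On $\mathcal{E}_1\cap\mathcal{E}_2$ this gives $L_D(\wgdbar)\le L_S(\wgdbar) + O\brk1{LK/\sqrt{n}+c\sqrt{\log(8/\delta)/n}}$. The monotonicity-in-magnitude assumption \cref{eq:loss_assumption} yields $\ell(g(a),y)\le\ell(a,y)$ pointwise and hence $L_S(\wgdbar)\le F_S(\wgdbar)$, while the GD optimization guarantee \cref{eq:gd_optimization_bound} applied with $B=\norm{w}$ gives $F_S(\wgdbar)\le F_S(w)+\eta L^2+\norm{w}^2/(\eta T)$ for every $w\in\R^d$ deterministically. Finally, for any fixed comparator $w$, \cref{eq:bound_on_ell} bounds $\ell(w\dotp\phi(x),y)$ by $O(\norm{w}L+1)$, so Hoeffding's inequality produces an event $\mathcal{E}_3$ of probability at least $1-\delta/2$ on which $F_S(w)\le F_D(w)+O\brk1{\norm{w}L\sqrt{\log(2/\delta)/n}}$; the additive constant absorbs into the $c\sqrt{\log(8/\delta)/n}$ term. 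A union bound over $\mathcal{E}_1,\mathcal{E}_2,\mathcal{E}_3$ chains the inequalities to prove the theorem once $K$ is substituted; taking the deterministic comparator $w^*$ that (approximately) minimizes the $w$-dependent right-hand side yields the infimum form.

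The main obstacle is the second step. A naive approach would apply the distribution-independent uniform convergence \cref{eq:gen_bounded_glm} in a ball around the origin large enough to contain $\wgdbar$, but such a ball has radius as large as $\Theta(\eta L T)$, which would destroy the rate. Recentering at the distribution-dependent point $\bar u$ supplied by \cref{thm:gd_iterates} shrinks the effective radius to $K$, and is precisely what makes the uniform convergence tight enough. The clipping procedure plays a complementary, technical role: it replaces the a-priori unbounded GLM loss with one uniformly bounded by $c$, which is the bounded-differences hypothesis required to obtain the high-probability form of uniform convergence via McDiarmid. The structural assumption \cref{eq:loss_assumption} is exactly what guarantees that this replacement is harmless in the chain above, since clipping can only \emph{decrease} the loss at $\wgdbar$.
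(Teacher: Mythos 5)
Your proposal is correct and follows essentially the same route as the paper's proof: \cref{thm:gd_iterates} to place $\wgdbar$ in a deterministic ball $\W^K_{\bar u}$, \cref{lem:genwku} applied to the clipped loss plus McDiarmid for high-probability uniform convergence on that ball, \cref{eq:loss_assumption} to drop the clipping at $\wgdbar$, the optimization bound \cref{eq:gd_optimization_bound} with $B=\norm{w}$, Hoeffding at an approximately optimal deterministic comparator, and a union bound. The only differences are bookkeeping (your split of $\delta$ and where the additive constant from $|\ell(0,y)|\le c$ is absorbed), which match the paper's argument in substance.
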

In particular, setting $\eta= 1/(L\sqrt{T})$ and $T=n$ we obtain that with probability $1-\delta$:
\[     
    \E_{(x,y)\sim D}\brk[s]1{\ell\brk1{g\brk1{\wgdbar\dotp\phi(x)},y}}\le \inf_{w\in\W^1}\brk[c]3{ 
    \E_{(x,y)\sim D}\brk[s]1{\ell\brk1{w\dotp\phi(x),y}}} 
    +
    \tilde O\left(\frac{L+c}{\sqrt{n}}\log (1/\delta)\right)
\]

\section{Comparison with non-distribution-dependent uniform convergence}\label{sec:comparison}
In this section, we contrast our approach with what might be possible with a more traditional, non-distribution-dependent, uniform convergence.  In particular, we consider an argument based on ensuring that the output of GD, with some stepsize $\eta$ and number of iterations $T$, is always (or with high probability) in a ball of radius $B(\eta,T)$ around the origin, namely, $\norm{\wgdbar}\leq B(\eta,T)$. Hence, using Rademacher complexity bounds for GLMs, we can say that its population risk is within $O(BL/\sqrt{n})$ of its empirical risk. By selecting $\eta$ and $T$ to be very small, one can always ensure that the output of GD is in a small ball, but we also need to balance that with the empirical suboptimality of the output.  That is, traditional bounds require us to find $\eta$ and $T$ that balance between the guarantee on the empirical suboptimality and the guarantee on the norm of the output. Such an approach would then result in population suboptimality that is governed by these two terms:
\begin{align} \label{eq:suboptimality_center}
    G(n)
    \coloneqq
    \inf_{\eta,T}\sup_{D,f}\brk[c]3{\max\left\{\E_{S\sim D^n}\brk[s]1{F_S(\wgdbar)-\min_{w\in\W^1}F_S(w)}, \E_{S\sim D^n}\left[\frac{\norm{\wgdbar}L}{\sqrt{n}}\right]\right\}}
    ,
\end{align}
where $D$ and $f$ are taken over all valid distributions such that $f$ is convex and Lipschitz.
In particular, we would get,
\begin{align*}
    \E_{S\sim D^n}\brk[s]1{F_D(\wgdbar)}
    \leq 
    \min_{w\in\W^1}F_D(w) + O(G(n))
    .
\end{align*}
\begin{claim}
For GLMs it holds that $G(n)=\Theta(L/n^{1/4})$.
\end{claim}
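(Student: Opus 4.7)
The plan is to prove matching $O(L/n^{1/4})$ and $\Omega(L/n^{1/4})$ bounds for $G(n)$. For the \emph{upper bound}, I would just combine two facts that do not depend on the particular distribution. First, the standard GD guarantee (\cref{eq:gd_optimization_bound}) with $B=1$ yields empirical suboptimality over $\W^1$ of at most $\eta L^2+\tfrac{1}{\eta T}$. Second, since $\|\nabla F_S\|\le L$ and $\wgd_0=0$, the triangle inequality gives $\|\wgd_t\|\le \eta L t$ and hence $\|\wgdbar\|\le \eta L T$. Therefore
\[
G(n)\;\le\;\inf_{\eta,T}\max\!\brk[c]3{\eta L^2+\tfrac{1}{\eta T},\ \tfrac{\eta L^2 T}{\sqrt n}}.
\]
Choosing $\eta=\tfrac{1}{L n^{1/4}}$ and $T=\sqrt{n}$ makes all three quantities $\eta L^2$, $\tfrac{1}{\eta T}$, and $\tfrac{\eta L^2 T}{\sqrt n}$ equal to $L/n^{1/4}$, giving the upper bound.

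For the \emph{lower bound} I would fix $(\eta,T)$ and exhibit a single-point distribution $D$ and a convex $L$-Lipschitz $f:\R\to\R$ such that the inner $\max$ is $\Omega(L/n^{1/4})$. I use two one-dimensional instances. \textbf{Instance I} (linear): $f(w)=Lw$. Then $\wgd_t=-\eta L t$ and $|\wgdbar|=\eta L(T+1)/2$, giving the generalization term $\E[\|\wgdbar\|L/\sqrt n]\ge \tfrac{\eta L^2 T}{2\sqrt n}$. \textbf{Instance II} (shifted absolute value): $f(w)=L|w-1|$. By carefully tracking GD through its linear-approach phase ($w_t=\eta L t$ for $t\le T_0:=1/(\eta L)$) and its subsequent oscillation phase between $1-\eta L$ and $1$, the average iterate computes to $\wgdbar\approx 1-\tfrac{\eta L}{2}-\tfrac{1}{2\eta L T}$, whence
\[
F_S(\wgdbar)-\min_{w\in\W^1}F_S(w)\;\ge\;\tfrac{\eta L^2}{2}+\tfrac{1}{2\eta T}.
\]
Now I split into cases on $\eta L T$. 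If $\eta L T\ge n^{1/4}$, Instance I alone gives norm term $\ge L/(2n^{1/4})$. If $\eta L T<n^{1/4}$, then on Instance II either (a) $\eta L T\le 1$, so GD never reaches the optimum and the suboptimality is $L(1-\tfrac{\eta L T}{2})\ge L/2\ge L/n^{1/4}$, or (b) $\eta L T\in(1,n^{1/4})$, and the $\tfrac{1}{2\eta T}=\tfrac{L}{2\eta L T}$ part of the oscillation bound already exceeds $L/(2n^{1/4})$. In either case the inner $\max$ is $\Omega(L/n^{1/4})$, and taking the $\sup$ over the two instances (equivalently, choosing the right one based on $\eta L T$) yields $G(n)=\Omega(L/n^{1/4})$.

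The main obstacle is the rigorous lower bound on the empirical suboptimality for Instance II in the oscillation regime: one needs to show that the averaging in \cref{eq:gd_average} does not cancel the distance of $\wgdbar$ from $w^\star=1$, which requires the careful bookkeeping of the contribution of the $T_0$ initial linear-phase iterates to $\wgdbar$ (this is what produces the $1/(2\eta L T)$ correction). A secondary subtlety is the edge case where $\eta L$ is near $2$: in that narrow window Instance II oscillates symmetrically around $1$ and the averaged iterate happens to land near $w^\star$, killing the suboptimality lower bound. I would handle this by replacing $f(w)=L|w-1|$ with $f(w)=L|w-c|$ for an appropriately chosen $c$ (e.g., $c=1/2$) whenever $\eta L$ lies in this window; then $\wgdbar\approx \eta L/2$ is far from $w^\star=c$ and the suboptimality is $\Omega(L)\gg L/n^{1/4}$. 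All of the edge-case analysis is absorbed into picking the $c$ in the family $\{f(w)=L|w-c|\}$ that best exposes the given $(\eta,T)$, keeping the lower-bound argument fundamentally one-dimensional and explicit.
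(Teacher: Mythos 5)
Your upper bound and your first lower-bound instance coincide with the paper's: the choice $\eta=1/(Ln^{1/4})$, $T=\sqrt n$ balancing $\eta L^2$, $1/(\eta T)$ and $\eta L^2T/\sqrt n$ is exactly the paper's upper bound, and $f(w;z)=Lw$ in the regime $\eta LT\ge n^{1/4}$ is exactly the paper's first case. Where you genuinely diverge is the complementary regime $\eta LT<n^{1/4}$: the paper does \emph{not} use a kinked loss at all, but simply the linear function $f(w;z)=Lw/n^{1/4}$. There GD drifts to $\wgdbar=-\eta L(T+1)/(2n^{1/4})$, while the minimizer over $\W^1$ sits at $-1$ with value $-L/n^{1/4}$, so the empirical suboptimality is $L/n^{1/4}-\eta L^2(T+1)/(2\sqrt n)\ge L/(2n^{1/4})-L/(2Tn^{1/4})$ with no case analysis, no oscillation bookkeeping, and no subgradient ambiguity. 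Your shifted absolute-value instance can be pushed through, but it carries real soft spots that the paper's construction avoids: (i) the oscillation phase of $L|w-c|$ averages to $a-\eta L/2$ with $a=\eta L\lceil c/(\eta L)\rceil\in[c,c+\eta L)$, which can sit \emph{above} $c$ and exactly cancel the linear-phase deficit (e.g.\ $\eta L=0.3$, $c=1$, $T=27$ gives $\wgdbar=1$ and suboptimality $0$), so the claimed bound $\eta L^2/2+1/(2\eta T)$ is false for some $(\eta,T)$ at $c=1$ and the entire burden falls on your $c$-tuning step, which you only sketch; (ii) at the kink the subgradient is the whole interval $[-L,L]$ and you do not control which element GD uses, so the ``oscillation'' dynamics themselves are not determined by the construction; (iii) in your case (a) the bound $L(1-\eta LT/2)\ge L/2$ fails at the corner $\eta L=1$, $T=1$ where $\wgdbar=1$ exactly. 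None of these is fatal---each is repaired by an appropriate $c$---but they mean your argument needs a careful lemma of the form ``for every $(\eta,T)$ there is a $c$ such that\dots'', whereas the paper's weak-slope linear instance delivers the same $\Omega(L/n^{1/4})$ in three lines. I would recommend adopting the paper's second instance; your version buys nothing extra here, since $G(n)$ only requires exhibiting \emph{some} adversarial $(D,f)$ for each $(\eta,T)$.
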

\begin{proof}
The upper bound follows by taking $\eta=1/(L\sqrt{T})$ and $T=\sqrt{n}$, bounding the first term using the standard GD guarantee in \cref{eq:gd_optimization_bound}, and the second term by noting that each step increases the gradient by at most $\eta L$. Therefore, $\norm{\wgdbar}\leq \eta L T$ and we obtain,
\begin{align*}
    G(n)
    \leq 
    \eta L^2+\frac{1}{\eta T} + \frac{\eta L^2 T}{\sqrt{n}}
    =
    O\brk3{\frac{L}{n^{1/4}}}
    .
\end{align*}

For the lower bound we consider two deterministic functions in one dimension. When $\eta T \geq n^{1/4}/L$ we consider the deterministic objective $f(w;z)=Lw$. Clearly, the norm of the solution is $\norm{\wgdbar}=\frac{1}{T}\sum_{t=1}^T\eta L t\geq \eta L T/2$, thus:
\begin{align*}
    \frac{\norm{\wgdbar}L}{\sqrt{n}}
    \geq
    \Omega\brk3{\frac{L}{n^{1/4}}}
    .
\end{align*}
When $\eta T < n^{1/4}/L$ we consider the objective $f(w;z)=Lw/n^{1/4}$. Similarly, we have that $\wgdbar=-\eta L (T+1)/(2n^{1/4})$. Thus, the empirical risk is:
\begin{align*}
    F_S(\wgdbar)-\min_{w\in\W^1}F_S(w)
    =
    -\frac{\eta L^2 (T+1)}{2n^{1/8}} + \frac{L}{n^{1/4}}
    \geq
    \frac{L}{2n^{1/4}} - \frac{L}{2Tn^{1/4}}
    \geq
    \Omega\brk3{\frac{L}{n^{1/4}}}
    .
\end{align*}
\end{proof}

The claim shows that uniform convergence to a fixed ball around the origin \emph{can} ensure learning, but with a suboptimal sample complexity of $O(1/\varepsilon^4)$. To obtain better bounds using this approach, additional structural assumption on the loss or the distribution are required. For example, the work of \cite{shamir20} assumes max-margin and smoothness(specifically the logistic loss) to obtain bounds on the norm of the solution, while our result applies to general Lipschitz GLMs without any distribution assumptions. It is insightful to directly contrast \cref{eq:suboptimality_center} to the approach of \cref{sec:main_results}, which essentially entails looking at:
\begin{align*}
    \tilde{G}(n)
    \coloneqq
    \inf_{\eta,T}\sup_{D,f}\brk[c]3{\max\left\{\E_{S\sim D^n}\brk[s]1{F_S(\wgdbar)-\min_{w\in\W^1}F_S(w)} , \E_{S\sim D^n}\left[\frac{\norm{\wgdbar -u}L}{\sqrt{n}}\right]\right\}}
    ,
\end{align*}
for some deterministic $u$. We can then apply a uniform concentration guarantee for a ball around it, and so $\tilde{G}$ is also sufficient to ensure:
\begin{align*}
    \E_{S\sim D^n}\brk[s]1{F_D(\wgdbar)}
    \leq 
    \min_{w\in\W^1}F_D(w) + O(\tilde{G}(n))
    .
\end{align*}
In \cref{sec:main_results} we show that $\tilde{G}(n) =\tilde{O}(L/\sqrt{n})$, improving on $G(n)$ and yielding optimal learning, up to logarithmic factors.

\section{Technical Overview} \label{sec:technical_overview}
Our main result, \cref{thm:gd_excess_risk_ub}, establishes a generalization bound for GD without projection, and it builds upon a structural result presented in \cref{thm:gd_iterates}. We next, then, outline the derivation of both these results. Because the most interesting implication of our results are derived when we choose $\eta=\tilde{O}(1/\sqrt{T})$ and $T=O(n)$, we will focus here in the exposition on this regime. This is mainly to avoid clutter notation. Hence, unless stated otherwise we assume that $T$ and $\eta$ are fixed appropriately.

Our proof of \cref{thm:gd_excess_risk_ub} relies on two steps. First, through \cref{thm:gd_iterates} we argue that
that output of GD will be (w.h.p) in a fixed ball that depends solely on the distribution (but not on the sample). Then, as a second step, we can apply standard uniform convergence for a bounded-norm balls (see \cref{lem:genwku}) to reason about  generalization.

The second step, builds on a standard generalization bound that is derived through Rademacher complexity. Also notice, that the first step follows immediately from \cref{thm:gd_iterates}. Indeed, \cref{thm:gd_iterates} argues that there exist a sequence $u_1,\ldots, u_T$ such that if $w_t^S$ is the trajectory of GD over the empirical risk, we will have (w.h.p):
\[\|w_t^S-u_t\|=O(1).\]
The output of GD is the averaged iterate, hence we obtain that indeed $\wgdbar$ is restricted to a ball around $u=\frac{1}{T}\sum_{t=1}^T u_t$, as required. 
Therefore, we are left with proving our main structural result: That the iterates of GD are in a proximity of a trajectory $u_1,\ldots, u_T$ that depends solely on the distribution (i.e. \cref{thm:gd_iterates}).

For the end of proving \cref{thm:gd_iterates}, we introduce the following GD trajectory:
\paragraph{Gradient Descent on the population loss.} We consider an alternative sequence of gradient descent that operates on the population loss $F_D(w)$ rather than the empirical risk $F_S(w)$. The update rule is then,
\begin{align}\label{eq:gd_pop_update}
    \ugd_{t+1}
    =
    \ugd_t-\eta\nabla F_D(\ugd_t)
    .
\end{align}
\ignore{And we denote the output by \begin{align}\label{eq:gd_pop_average}
     \ugdbar
    =
    \frac{1}{T}\sum_{i=1}^T\ugd_t
    .
\end{align}}
The sequence $w_1^D,\ldots, w_t^D$ will serve as the sequence $u_1,\ldots, u_T$.

In the proof of \cref{thm:gd_iterates} we require high probability rates. However, for the simplicity of the exposition, we will prove here a slightly weaker, in expectation, result:
(the proof is defered  to \cref{prf:gd_iterates_exp}).
\begin{lemma}\label{lem:gd_iterates_exp}
Let $S=\{z_1,\ldots, z_n\}$ be an i.i.d sequence drawn from some unknown distribution $D$. Assume that $f:\R^d\times\Z\rightarrow\R$ is a convex and $L$-Lipschitz function, for all $z\in\Z$, with respect to its first argument. Then, the iterates of gradient descent, as depicted in \cref{eq:gd_update,eq:gd_pop_update}, satisfy: 
\begin{align*}
    \forall t\in[T]:\quad
    \E_{S\sim D^n}\brk[s]2{\norm{\wgd_t-\ugd_t}}
    \leq
    O\brk3{\frac{\eta L t}{\sqrt{n}}+\eta L \sqrt{t}}
\end{align*}
\end{lemma}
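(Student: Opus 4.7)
Set $\Delta_t := \wgd_t - \ugd_t$ and $M_t := \sqrt{\E\|\Delta_t\|^2}$; since $\Delta_0 = 0$, the aim is to bound $M_t = O(\eta L\sqrt{t} + \eta L t/\sqrt{n})$, from which Jensen's inequality gives the claimed bound on $\E\|\Delta_t\|$. The strategy is to derive a self-referential recursion on $M_t^2$ and close it by induction on $t$.

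First, I decompose the per-step discrepancy as
\[
    \nabla F_S(\wgd_t) - \nabla F_D(\ugd_t) = A_t + B_t,
\]
where $A_t := \nabla F_S(\wgd_t) - \nabla F_S(\ugd_t)$ is a ``stability'' term and $B_t := \nabla F_S(\ugd_t) - \nabla F_D(\ugd_t)$ is a ``deviation'' term. Convexity of $F_S$ (monotonicity of the subgradient) yields $\langle \Delta_t, A_t\rangle \ge 0$, while Lipschitzness gives $\|A_t+B_t\| \le 4L$. Expanding the squared norm of the update,
\[
    \|\Delta_{t+1}\|^2 \le \|\Delta_t\|^2 - 2\eta \langle \Delta_t, B_t\rangle + O(\eta^2 L^2).
\]
Crucially, $\ugd_t$ is deterministic (it depends only on $D$, not on $S$), so $B_t$ is an average of $n$ i.i.d.\ mean-zero vectors each of norm at most $2L$, whence $\E\|B_t\|^2 \le 4L^2/n$. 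Telescoping and taking expectations gives
\[
    M_t^2 \le O(\eta^2 L^2 t) + 2\eta \sum_{s=0}^{t-1} \bigl|\E[\langle \Delta_s, B_s\rangle]\bigr|.
\]

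The main obstacle is that $\Delta_s$ depends on $S$ through every prior gradient step, so $\E[\langle \Delta_s, B_s\rangle]$ does \emph{not} vanish even though $\E[B_s] = 0$. I handle this by Cauchy--Schwarz:
\[
    \bigl|\E[\langle \Delta_s, B_s\rangle]\bigr| \le \sqrt{\E\|\Delta_s\|^2}\,\sqrt{\E\|B_s\|^2} \le \frac{2L}{\sqrt{n}}\,M_s,
\]
which converts the bound into the self-referential recursion
\[
    M_t^2 \le O(\eta^2 L^2 t) + \frac{O(\eta L)}{\sqrt{n}}\sum_{s=0}^{t-1} M_s.
\]

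Finally, I close this by strong induction on $t$ with the ansatz $M_s \le C\eta L(\sqrt{s} + s/\sqrt{n})$ for a sufficiently large universal constant $C$: using $\sum_{s<t}\sqrt{s} = O(t^{3/2})$ and $\sum_{s<t} s = O(t^2)$, substitution yields a right-hand side of the form $O(\eta^2 L^2)\bigl(t + t^{3/2}/\sqrt{n} + t^2/n\bigr)$, which matches the square of the ansatz term-by-term and so closes the induction once $C$ is chosen large enough. Jensen's inequality $\E\|\Delta_t\| \le M_t$ then produces the lemma.
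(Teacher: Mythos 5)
Your proposal is correct, and its skeleton coincides with the paper's: you compare the empirical trajectory to the population-GD trajectory, split the gradient discrepancy into a stability term killed by monotonicity of $\nabla F_S$ and a deviation term $B_t=\nabla F_S(\ugd_t)-\nabla F_D(\ugd_t)$, and exploit that $\ugd_t$ is deterministic so that $\E\brk[s]{\norm{B_t}^2}\le 4L^2/n$. The one place you diverge is in closing the recursion: the paper absorbs the cross term \emph{pointwise} via $2ab\le ca^2+b^2/c$ with $c=t+1$, which turns the bound into a multiplicative recursion with factor $(1+1/c)$ and costs only the constant $e$ after $t$ steps; you instead telescope, pass to expectations, apply Cauchy--Schwarz at the level of second moments to obtain the self-referential recursion $M_t^2\le O(\eta^2L^2t)+O(\eta L/\sqrt{n})\sum_{s<t}M_s$, and close it by strong induction on the ansatz $M_s\le C\eta L(\sqrt{s}+s/\sqrt{n})$. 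Your induction does close: substituting the ansatz produces the three terms $t$, $Ct^{3/2}/\sqrt{n}$, $Ct^2/n$, each dominated by the corresponding term in the square of the ansatz once $C$ is large, and the base case $M_0=0$ is trivial, so the argument is sound. The paper's pointwise version has the practical advantage that the identical recursion is reused verbatim for the high-probability statement of \cref{thm:gd_iterates}, where one cannot first pass to expectations; your expectation-level argument would have to be reworked for that extension.
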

This lemma bounds the distance, in expectation, between the GD trajectory over the empirical risk and the GD trajectory over the population risk. Again, we remark that to obtain the final result in \cref{thm:gd_iterates}, a stronger high probability version of \cref{lem:gd_iterates_exp} is required. The proof of \cref{thm:gd_iterates} follows similar lines to that of \cref{lem:gd_iterates_exp} with modifications concerning specific concentration inequalities. 


One crucial challenge in proving \cref{lem:gd_iterates_exp} stems from the adaptivity of the gradient sequence. In particular, notice that the sequences $w_t^S, w_t^D$ are governed, respectively, by the dynamics
\[w_t^S=w_{t-1}^S - \eta \nabla F_S(w^S_t),\quad w_t^D=w_{t-1}^D - \eta \nabla F_D(w^D_t).\]

At a first glance, since $\nabla F_S(w)$ is an estimate of $\nabla F_D(w)$, it might seem that the result can be obtained by standard concentration bounds and application of a union bound along the trajectory. 
Unfortunately, such a naive argument cannot work. Indeed, since $w_t$, for $t\ge 2$, depends on the sequence $S$, $\nabla F_S(w_t)$ is not necessarily an unbiased estimate of $\nabla F_D(w_t)$. This is not just seemingly, and a construction in \cite{amir21gd} demonstrates how, even after only two iterates the gradient $\nabla F_S(w_t)$ can diverge, significantly, from $\nabla F_D(w_t)$. While these constructions are outside of the scope of GLMs, we remark that \cref{thm:gd_iterates} holds for \emph{any} convex and Lipschitz function. Also, it was in fact shown that even for GLMs  \citep{foster2018uniform}, the estimate of the gradients don't have any dimension-independent uniform convergence bound, making it a challenge to pursue such a proof direction.
To summarize, we need to prove that the two sequences remain in the vicinity, even though, apriori, the update steps of the two functions may be different at each iteration. 

Towards this goal, we follow an analysis, reminiscent of the uniform argument stability analysis of \cite{bassily2020stability} for non-smooth convex losses. In a nutshell, both arguments compare the trajectory to a reference trajectory, and bound the incremental difference between the two trajectories while exploiting monotonicity of the convex function.
In the case of stability, the reference trajectory is the trajectory $w^{S'}_t$ induced by a sample $S'=\brk{z_1,\ldots,z'_i,\ldots,z_n}$ that differ on a single example from the sample $S=\brk{z_1,\ldots,z_i,\ldots,z_n}$.  

 \citeauthor{bassily2020stability} showed that if $S$ and $S'$ are two sequences that differ on a single example, then:
\[\|w_t^S-w_t^{S'}\|=O\left(\frac{\eta T}{n}+\eta \sqrt{T}\right).\] In comparison, we show:
\[ \E_{S\sim D^n}\left[\|w_t^S-w_t^D\|\right]=O\left(\frac{\eta T}{\sqrt{n}}+\eta \sqrt{T}\right).\]

We note, that both results are optimal. Namely, the stability bound of \cite{bassily2020stability} is the optimal stability rate, and the proximity bound we provide is the best possible bound against a fixed point, independent of the sample.

Note that both bounds yield $O(1)$ difference for the natural choice of $\eta$ and $T$. However, $O(1)$-stability guarantees are vacuous in the sense that they do not provide any interesting implication over the generalization of the algorithm. Whereas, we provide $O(1)$-proximity guarantee to some fixed point, completely independent of the sample $S$. This does imply generalization in our setup. 

One might also suggest that our result of $O(1)$-proximity can be derived from $O(1)$-stability. However, it turns out this is not the case. For the same instance we use to lower bound the proximity by $\Omega\brk{\eta T/\sqrt{n}}$ in \cref{thm:dist_lower_bound}, it can be easily shown that GD will be $O(\eta T/n)$-stable. Setting $\eta=O(1)$ and $T=O(n)$ will then imply $O(1)$-stability but with $\Omega(\sqrt{n})$-proximity. This asserts that our result is not a direct consequence of standard stability arguments.

\bibliographystyle{abbrvnat}
\bibliography{refs}

\appendix
\section{Main Proofs}
Recall that we define $w_t^D$ to be the $t$-th iterate when applying GD over the population risk as depicted in \cref{eq:gd_pop_update}.

\subsection{Proof of \cref{lem:gd_iterates_exp}} \label{prf:gd_iterates_exp}
Observe that,
\begin{align}
    \norm{\wgd_{t+1}-\ugd_{t+1}}^2
    &=
    \norm{\wgd_{t}-\ugd_{t}-\eta\brk{\nabla F_S(\wgd_t)-\nabla F_D(\ugd_t)}}^2 \notag \\
    &=
    \norm{\wgd_{t}-\ugd_{t}}^2-2\eta\brk{\nabla F_S(\wgd_t)-\nabla F_D(\ugd_t)}\brk{\wgd_t-\ugd_t}+\eta^2\norm{\nabla F_S(\wgd_t)-\nabla F_D(u_t)}^2 \notag \\
    &\leq
    \norm{\wgd_{t}-\ugd_{t}}^2-2\eta\brk{\nabla F_S(\wgd_t)-\nabla F_D(\ugd_t)}\brk{\wgd_t-\ugd_t}+4\eta^2L^2 \tag{$L$-Lipschitz} \\
    &=
    \norm{\wgd_{t}-\ugd_{t}}^2-2\eta\brk{\nabla F_S(\wgd_t)-\nabla F_S(\ugd_t)}\brk{\wgd_t-\ugd_t} \notag \\
    &\phantom{=\norm{\wgd_{t}-\ugd_{t}}^2} +2\eta\brk{\nabla F_D(\ugd_t)-\nabla F_S(\ugd_t)}\brk{\wgd_t-\ugd_t}+4\eta^2L^2 \notag\\
    &\le 
    \norm{\wgd_{t}-\ugd_{t}}^2+2\eta\brk{\nabla F_D(\ugd_t)-\nabla F_S(\ugd_t)}\brk{\wgd_t-\ugd_t}+4\eta^2L^2 \notag
    ,
\end{align}
where in the last inequality we use monotinicity of convex functions: $\brk{\nabla F_S(w)-\nabla F_S(u)}\brk{w-u}\geq 0$ for any $w,u$. Next, applying Cauchy-Schwarz inequality we get,
\begin{align*}
    \norm{\wgd_{t+1}-\ugd_{t+1}}^2
    &\leq
    \norm{\wgd_{t}-\ugd_{t}}^2 + 2\eta\brk{\nabla F_D(\ugd_t)-\nabla F_S(\ugd_t)}\brk{\wgd_t-\ugd_t} + 4\eta^2L^2 \\
    &\leq
    \norm{\wgd_{t}-\ugd_{t}}^2 + 2\eta\norm{\nabla F_S(\ugd_t)-\nabla F_D(\ugd_t)}\norm{\wgd_t-\ugd_t} + 4\eta^2L^2 \tag{C.S.} \\
    &\leq
    \norm{\wgd_t-\ugd_t}^2 + \eta^2 t\norm{\nabla F_S(\ugd_t)-\nabla F_D(\ugd_t)}^2+\frac{1}{t}\norm{\wgd_t-\ugd_t}^2 + 4\eta^2L^2 \notag \\
    &=
    \brk2{1+\frac{1}{c}}\norm{\wgd_t-\ugd_t}^2 + \eta^2 c\norm{\nabla F_S(\ugd_t)-\nabla F_D(\ugd_t)}^2 + 4\eta^2L^2
    .
\end{align*}
The last inequality follows from the observation that $2ab\leq ca^2+b^2/c$ for any $c>0$ and $a,b\geq 0$, specifically for  $a=\eta\|\nabla F_S(w_t^D)-\nabla F_D(w_t^D)\|$, and $b=\|w_t^S-w_t^D\|$. 

Applying the formula recursively, and noting that $w_0=u_0$:
\begin{align*}
    \norm{\wgd_{t+1}-\ugd_{t+1}}^2
    &\leq
    \sum_{t'=0}^t\brk2{1+\frac{1}{c}}^{t-t'}\brk3{\eta^2 c\norm{\nabla F_S(\ugd_{t'})-\nabla F_D(\ugd_{t'})}^2 + 4\eta^2L^2} \\
    &\leq
    \sum_{t'=0}^t\brk3{e\eta^2t\norm{\nabla F_S(\ugd_{t'})-\nabla F_D(\ugd_{t'})}^2 + 4e\eta^2L^2}
    ,
\end{align*}
where in the last inequality we chose $c=t+1$ and used the known bound of $(1+1/t)^t\leq e$.
Taking the square root and using the inequality of $\sqrt{a+b}\leq \sqrt{a}+\sqrt{b}$ we conclude
\begin{align*}
    \norm{\wgd_{t+1}-\ugd_{t+1}}
    &\leq
    \sqrt{e\eta^2(t+1)\sum_{t'=0}^t\norm{\nabla F_S(\ugd_{t'})-\nabla F_D(\ugd_{t'})}^2} + 2\sqrt{e}\eta L \sqrt{t+1}
    .
\end{align*}
We are interested in bounding $\E_{S\sim D^n}\brk[s]2{\norm{\nabla F_S(\ugd_t)-\nabla F_D(\ugd_t)}^2}$. By the definition of the empirical risk
\begin{align*}
    \E_{S\sim D^n}\brk[s]2{\norm{\nabla F_S(\ugd_t)-\nabla F_D(\ugd_t)}^2}
    &=
    \frac{1}{n^2}\E_{S\sim D^n}\brk[s]3{\norm2{\sum_{i=1}^n\brk[s]2{\nabla f(\ugd_t;z_i)-\E_{z\sim D}\brk[s]{\nabla f(\ugd_t;z)}}}^2}
\end{align*}
Note that by Lipschitzness $\norm{\nabla f(\ugd_t;z_i)-\E\brk[s]{\nabla f(\ugd_t;z)}}\leq 2L$, and that $\brk[c]1{\nabla f(\ugd_t;z_i)-\E\brk[s]{\nabla f(\ugd_t;z)}}_{i\in[n]}$ are independent zero-mean random vectors (as $\ugd_t$ is independent of $z_i$). Thus, we get
\begin{align*}
    \E_{S\sim D^n}\brk[s]2{\norm{\nabla F_S(\ugd_t)-\nabla F_D(\ugd_t)}^2}
    =
    \frac{1}{n^2}\sum_{i=1}^n\brk[s]3{\E_{S\sim D^n}\brk[s]2{\norm1{\nabla f(\ugd_t;z_i)-\E_{z\sim D}\brk[s]{\nabla f(\ugd_t;z)}}^2}}
    \leq
    \frac{4L^2}{n}
    .
\end{align*}
Consequently, taking the expectation over the sample $S$ and using Jensen's inequality
\begin{align*}
    \E_{S\sim D^n}\brk[s]2{\norm{\wgd_{t+1}-\ugd_{t+1}}}
    \leq
    \frac{4\eta L(t+1)}{\sqrt{n}} + 4\eta L\sqrt{t+1}
    .
\end{align*}

\subsection{Proof of \cref{thm:gd_excess_risk_ub}} \label{prf:gd_excess_risk_ub}

Starting with \cref{lem:genwku} we obtain for any domain $\W^K_u$,
\begin{align}\label{eq:gen_shifted_bounded_glm}
    \E_{S\sim D^n}\brk[s]1{\sup_{w\in \W^K_u}\brk[c]{F_D(w)-F_S(w)}}
    &\leq
    \frac{2LK}{\sqrt{n}}
    .
\end{align}
From \cref{thm:gd_iterates}, there exists a sequence $u_1,\ldots,u_T$ such that with probability at least $1-\delta$,
\begin{align}\label{eq:gd_iterates_bound}
    \norm2{\wgdbar-\frac{1}{T}\sum_{t=1}^Tu_t}
    \leq
    \frac{7\eta LT}{\sqrt{n}}\sqrt{\log\brk{T/\delta}} + 4\eta L\sqrt{T}
    .
\end{align}
Setting $u=\frac{1}{T}\sum_{t=1}^Tu_t$ and $K$ to be the RHS in \cref{eq:gd_iterates_bound} we obtain:
\begin{align*}
    \E_{S\sim D^n}\brk[s]1{F_D(\wgdbar)-F_S(\wgdbar)}
    &\le
    \E_{S\sim D^n}\brk[s]1{\sup_{ w\in \W^K_{u}}\brk[c]{F_D(w)-F_S(w)}\big|\wgdbar\in \W^K_{u}} P(\wgdbar\in\W_{u}^K) \\
    &\quad+ 
    P(\wgdbar\notin\W_{u}^K)\sup_{S\sim D^n}\abs{F_D(\wgdbar)-F_S(\wgdbar)}\\
    &=
    \E_{S\sim D^n}\brk[s]1{\sup_{ w\in \W^K_{u}}\brk[c]{F_D(w)-F_S(w)}} -\E_{S\sim D^n}\brk[s]1{\sup_{ w\in \W^K_{u}}\brk[c]{F_D(w)-F_S(w)}\big|\wgdbar\notin\W^K_{u}} P(\wgdbar\notin\W_{u}^K) \\
    &\quad+
    P(\wgdbar\notin\W_{u}^K)\sup_{S\sim D^n} |F_D(\wgdbar)-F_S(\wgdbar)|\\
    &\le
    \E_{S\sim D^n}\brk[s]1{\sup_{ w\in \W^K_{u}}\brk[c]{F_D(w)-F_S(w)}} + P(\wgdbar\notin\W_{u}^K)\sup_{S\sim D^n}\sup_{w\in\W^K_u} \abs{F_D(w)-F_S(w)} \\
    &\quad+
    P(\wgdbar\notin\W_{u}^K)\sup_{S\sim D^n}\abs{F_D(\wgdbar)-F_S(\wgdbar)}\\ 
    &\le
    \frac{14\eta L^2T}{n}\sqrt{\log\brk{T/\delta}} + \frac{8\eta L^2\sqrt{T}}{\sqrt{n}} + P(\wgdbar\notin\W_{u}^K)\sup_{S\sim D^n}\sup_{w\in\W^K_u} \abs{F_D(w)-F_S(w)} \\
    &\quad+
    P(\wgdbar\notin\W_{u}^K)\sup_{S\sim D^n}\abs{F_D(\wgdbar)-F_S(\wgdbar)} \tag{\cref{eq:gd_iterates_bound,eq:gen_shifted_bounded_glm}} \\
    &\le
    \frac{14\eta L^2T}{n}\sqrt{\log\brk{T/\delta}} + \frac{8\eta L^2\sqrt{T}}{\sqrt{n}} + O\left(\delta\eta L^2T\sqrt{\log(T/\delta)}\right) 
    ,
\end{align*}
where we used \cref{eq:bound_on_ell} and the fact that $\|\wgdbar\|\le \eta L T$ and $\norm{u}+K\leq O\brk2{\eta L T + \eta L T \sqrt{\log(T/\delta)}/\sqrt{n}}\leq O\brk1{\eta LT\sqrt{\log(T/\delta)}}$ to bound the second and third terms. Hence:
\[|F_D(\wgdbar)-F_S(\wgdbar)|\le |F_D(\wgdbar)|+|F_S(\wgdbar)|\le O\left(\eta L^2 T\right),\]
and
\[\sup_{w\in\W^K_u} \abs{F_D(w)-F_S(w)}\le \sup_{w\in\W^K_u} \abs{F_D(w)}+\sup_{w\in\W^K_u} \abs{F_S(w)}\le O\brk2{\eta L^2T\sqrt{\log(T/\delta)}}.\]

Next, setting $\delta= O(1/\sqrt{nT})$  we get that:
\begin{align}\label{eq:gen_whp_bound}
    \E_{S\sim D^n}\brk[s]1{F_D(\bar{w}^S)-F_S(\bar{w}^S)}
    &\leq O\left(
    \frac{\eta L^2 T}{n}\sqrt{\log\brk{nT}}+\frac{\eta L^2 \sqrt{T}}{\sqrt{n}}\sqrt{\log\brk{nT}}\right)
    .
\end{align}

Finally, combining \cref{eq:gen_whp_bound,eq:gd_optimization_bound} we obtain that for every $w^\star \in \W^B$:
\begin{align*}
    \E_{S\sim D^n}\brk[s]1{F_D(\wgdbar)}- F_D(w^\star)
    &=
    \E_{S\sim D^n}\brk[s]1{F_D(\wgdbar)- F_S(\wgdbar)}+
    \E_{S\sim D^n}\brk[s]1{F_S(\wgdbar)}- F_D(w^\star)\\
    &=
    \E_{S\sim D^n}\brk[s]1{F_D(\wgdbar)- F_S(\wgdbar)}+
    \E_{S\sim D^n}\brk[s]1{F_S(\wgdbar)-F_S(w^\star)}\\
    &\leq
    O\left(\frac{\eta L^2 T}{n}\sqrt{\log (nT)}+\frac{\eta L^2 \sqrt{T}}{\sqrt{n}}\sqrt{\log\brk{nT}} + \eta L^2 +\frac{B^2}{\eta T} \right)
    .
\end{align*}

\subsection{Proof of \cref{thm:gd_iterates}} \label{prf:gd_iterates}
The proof is similar to that of \cref{lem:gd_iterates_exp} with the exception that here we employ specific concentration inequalities of random variables with bounded difference. The reference sequence we consider is the GD iterates over the population risk, namely, $\ugd_t$ as described in \cref{eq:gd_pop_update}.
Observe that
\begin{align}
    \norm{\wgd_{t+1}-\ugd_{t+1}}^2
    &=
    \norm{\wgd_{t}-\ugd_{t}-\eta\brk{\nabla F_S(\wgd_t)-\nabla F_D(\ugd_t)}}^2 \notag \\
    &=
    \norm{\wgd_{t}-\ugd_{t}}^2-2\eta\brk{\nabla F_S(\wgd_t)-\nabla F_D(\ugd_t)}\brk{\wgd_t-\ugd_t}+\eta^2\norm{\nabla F_S(\wgd_t)-\nabla F_D(\ugd_t)}^2 \notag \\
    &\leq
    \norm{\wgd_{t}-\ugd_{t}}^2-2\eta\brk{\nabla F_S(\wgd_t)-\nabla F_D(\ugd_t)}\brk{\wgd_t-\ugd_t}+4\eta^2L^2 \tag{$L$-Lipschitz} \\
    &=
    \norm{\wgd_{t}-\ugd_{t}}^2-2\eta\brk{\nabla F_S(\wgd_t)-\nabla F_S(\ugd_t)}\brk{\wgd_t-\ugd_t} \notag \\
    &\phantom{=\norm{\wgd_{t}-\ugd_{t}}^2} +2\eta\brk{\nabla F_D(\ugd_t)-\nabla F_S(\ugd_t)}\brk{\wgd_t-\ugd_t}+4\eta^2L^2 \notag
    .
\end{align}
From convexity of $F_S$ we know that $\brk{\nabla F_S(w)-\nabla F_S(u)}\brk{w-u}\geq 0$ for any $w,u$. Therefore, applying Cauchy-Schwarz inequality we get,
\begin{align}
    \norm{\wgd_{t+1}-\ugd_{t+1}}^2
    &\leq
    \norm{\wgd_{t}-\ugd_{t}}^2 + 2\eta\brk{\nabla F_D(\ugd_t)-\nabla F_S(\ugd_t)}\brk{\wgd_t-\ugd_t} + 4\eta^2L^2 \tag{convexity}\\
    &\leq
    \norm{\wgd_{t}-\ugd_{t}}^2 + 2\eta\norm{\nabla F_S(\ugd_t)-\nabla F_D(\ugd_t)}\norm{\wgd_t-\ugd_t} + 4\eta^2L^2 \tag{C.S.} \\
    &\leq
    \norm{\wgd_t-\ugd_t}^2 + \eta^2 t\norm{\nabla F_S(\ugd_t)-\nabla F_D(\ugd_t)}^2+\frac{1}{t}\norm{\wgd_t-\ugd_t}^2 + 4\eta^2L^2 \notag \\
    &=
    \brk2{1+\frac{1}{c}}\norm{\wgd_t-\ugd_t}^2 + \eta^2 c\norm{\nabla F_S(\ugd_t)-\nabla F_D(\ugd_t)}^2 + 4\eta^2L^2    .
    \label{eq:iterates_recursion}
\end{align}
The last inequality follows from the observation that $2ab\leq ca^2+b^2/c$ for any $c>0$ and $a,b\geq 0$, specifically for $a=\eta\|\nabla F_S(w_t^D)-\nabla F_D(w_t^D)\|$, and $b=\|w_t^S-w_t^D\|$.

We are interested in bounding $\norm{\nabla F_S(\ugd_t)-\nabla F_D(\ugd_t)}$. For that matter we consider the following concentration inequality which is a direct result of the bounded difference inequality by McDiarmid.
\begin{theorem*}[{\citet*[Example 6.3]{lugosi13}}]
Let $X_1,\ldots,X_n$ be independent zero-mean R.V's such that $\norm{X_i}\leq c_i/2$ and denote $v=\frac{1}{4}\sum_{i=1}^nc_i^2$. Then, for all $t\geq\sqrt{v}$,
\begin{align*}
    \bP\brk[c]3{\norm3{\sum_{i=1}^nX_i} > t}
    \leq
    e^{-(t-\sqrt{v})^2/(2v)}
    .
\end{align*}
\end{theorem*}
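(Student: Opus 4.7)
The plan is to derive the stated tail bound as a direct consequence of McDiarmid's bounded-differences inequality applied to the scalar functional $f(x_1,\ldots,x_n) = \norm{\sum_{i=1}^n x_i}$, combined with a second-moment bound on $\E[f]$ that exploits independence and zero-meanness. This reduces the argument to three short steps: verify the bounded-differences condition, bound $\E[f] \le \sqrt{v}$, and translate concentration around $\E[f]$ into concentration around $\sqrt{v}$.

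For the first step, replacing any single argument $x_i$ by $x_i'$ changes the sum by $x_i - x_i'$, so by the reverse triangle inequality the resulting change in $f$ is at most $\norm{x_i - x_i'} \le \norm{x_i} + \norm{x_i'} \le c_i$. McDiarmid's inequality then gives, for every $s \ge 0$, $\bP\{f - \E[f] > s\} \le \exp(-2s^2/\sum_i c_i^2) = \exp(-s^2/(2v))$. For the second step, since the $X_i$ are independent, zero-mean vectors in a Hilbert space, all cross inner products vanish, so $\E\norm{\sum_i X_i}^2 = \sum_i \E\norm{X_i}^2 \le \sum_i c_i^2/4 = v$, and Jensen's inequality then yields $\E[f] \le \sqrt{v}$. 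For the third step, whenever $t \ge \sqrt{v}$, setting $s = t - \sqrt{v} \ge 0$ and combining the two estimates gives $\bP\{\norm{\sum_i X_i} > t\} = \bP\{f > t\} \le \bP\{f - \E[f] > t - \sqrt{v}\} \le \exp(-(t-\sqrt{v})^2/(2v))$, which is exactly the desired bound.

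There is no substantive obstacle in this argument; it is essentially a textbook application of McDiarmid. The two points worth flagging are (a) the reliance on independence, zero-meanness, and an inner product structure on the range of the $X_i$ for the cross-term cancellation in the second-moment computation, which is precisely what produces the clean $\E[f] \le \sqrt{v}$ bound, and (b) the observation that the hypothesis $t \ge \sqrt{v}$ is exactly the regime in which the shift $s = t - \sqrt{v}$ is non-negative and hence legitimate in the one-sided McDiarmid deviation bound. Both features shape the inequality as stated and are easy to miss if one writes down the proof too quickly.
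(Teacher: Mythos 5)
Your proof is correct and is exactly the argument the paper's citation points to: the paper does not prove this statement itself but introduces it as ``a direct result of the bounded difference inequality by McDiarmid,'' which is precisely your route --- bounded differences of $f=\norm{\sum_{i=1}^n X_i}$ with increments $c_i$, the second-moment bound $\E[f]\le\sqrt{v}$ from independence and zero mean, and the shift $s=t-\sqrt{v}\ge 0$. There is nothing to add or correct.
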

Note that by Lipschitzness $\norm{\nabla f(\ugd_t;z_i)-\E\brk[s]{\nabla f(\ugd_t;z)}}\leq 2L$, and that $\brk[c]1{\nabla f(\ugd_t;z_i)-\E\brk[s]{\nabla f(\ugd_t;z)}}_{i\in[n]}$ are independent zero-mean random variables (as $\ugd_t$ is independent of $z_i$). Thus, for $\Delta \geq 2\frac{L}{\sqrt{n}}$:
\begin{align*}
    \bP\brk[c]3{\norm3{\frac{1}{n}\sum_{i=1}^n\nabla f(\ugd_t;z_i)-\E\brk[s]{\nabla f(\ugd_t;z)}} > \Delta}
    \leq
    e^{-(\Delta\sqrt{n}-2L)^2/(4L^2)}
    ,
\end{align*}
This implies that with probability $1-\delta$ we get,
\begin{align*}
    \norm{\nabla F_S(\ugd_t)-\nabla F_D(\ugd_t)}
    &\leq
    \frac{4L}{\sqrt{n}}\sqrt{\log(1/\delta)}
    .
\end{align*}
Plugging it back to  \cref{eq:iterates_recursion} we obtain w.p. $1-\delta$
\[
    \norm{\wgd_{t+1}-\ugd_{t+1}}^2
    \leq
    \brk2{1+\frac{1}{c}}\norm{\wgd_t-\ugd_t}^2 + \frac{16\eta^2 L^2c\log\brk{1/\delta}}{n} + 4\eta^2L^2
\]
Applying the formula recursively, and noting that $\wgd_0=\ugd_0$:
    \begin{align*}
    \norm{\wgd_{t+1}-\ugd_{t+1}}^2
    &\leq
    \sum_{t'=0}^t\brk2{1+\frac{1}{c}}^{t'}\brk3{\frac{16\eta^2 L^2c\log\brk{1/\delta}}{n} + 4\eta^2L^2} \\
    &\leq
    \frac{16e\eta^2 L^2(t+1)^2 \log\brk{1/\delta}}{n} + 4e\eta^2L^2 (t+1)
    ,
\end{align*}
where in the last inequality we chose $c=t+1$ and used the known bound of $(1+1/t)^t\leq e$.
Taking the square root and using the inequality of $\sqrt{a+b}\leq \sqrt{a}+\sqrt{b}$ we have
\begin{align*}
    \norm{\wgd_{t+1}-\ugd_{t+1}}
    &\leq
    7\sqrt{\frac{\eta^2 L^2(t+1)^2\log\brk{1/\delta}}{n}} + 4\eta L\sqrt{t+1}
    .
\end{align*}
By taking the union bound over all $t\in[T]$ we conclude the proof.

\subsection{Proof of \cref{thm:hp}} \label{prf:hp}
Similarly to the proof in \cref{prf:gd_excess_risk_ub}, let us consider the domain $\W^K_{u}=\brk[c]{w:\norm{w-u}\leq K}$, where we set $u=\frac{1}{T}\sum_{t=1}^Tu_t$, the average of the deterministic sequence in \cref{thm:gd_iterates}. From the assumption in \cref{eq:loss_assumption} it follows that for any $w$ we have that $\abs{\ell\brk1{g\brk1{w\dotp\phi(x)},y}}\leq c$. We also, can use \cref{lem:genwku} (applying it to $\ell\circ g \to \ell$) to obtain that
\begin{align}
    \E_{S\sim D^n}\brk[s]1{\sup_{w\in \W^K_{u}}\brk[c]1{F_D(g\circ w)-F_S(g\circ w)}} 
    \le 
    \frac{2LK}{\sqrt{n}}
    ,
\end{align}
where we denote
\[F_S(g\circ w) = \frac{1}{n}\sum_{i=1}^n \ell(g(w\dotp \phi(x_i),y_i)),\quad 
F_D(g\circ w) = \E_{(x,y)\sim D}\left[\ell(g(w\dotp \phi(x),y))\right].\]

Next, we define
\[G(S)= \sup_{w\in \W^{k}_{u}} \brk[c]1{F_D(g\circ w) - F_S(g\circ w)},\]
and note that for two samples, $S,S'$ that differ on a single example we have that
\[|G(S)-G(S')| \le \frac{2c}{n}.\]

Using then the bounded difference inequality by McDiarmid \citep[see][Lemma 26.4]{shalev2014understanding}. We have that with probability at least $1-\delta$,
\begin{align*}
    G(S)
    &=
    \sup_{w\in\W^K_{u}}\brk[c]3{\E_{(x,y)\sim D}\brk[s]{\ell\brk1{g\brk1{w\dotp\phi(x)},y}}-\frac{1}{n}\sum_{i=1}^n\ell\brk1{g\brk1{w\dotp\phi(x_i)},y_i}} \\
    &\le 
    \E_{S\sim D^n}[G(S)]+c\sqrt{\frac{2\log (2/\delta)}{n}} \tag{McDiarmid}\\
    &\le 
    \frac{2LK}{\sqrt{n}}+c\sqrt{\frac{2\log (2/\delta)}{n}}
    .
\end{align*}

From \cref{thm:gd_iterates} we have that with probability at least $1-\delta$,
\begin{align}\label{eq:gd_iterates_bound_aux}
    \norm{\wgdbar-u}
    \leq
    \frac{6\eta LT}{\sqrt{n}}\sqrt{\log\brk{T/\delta}} + 4\eta L\sqrt{T}
\end{align}
Taken together, and applying union bound, we have that with probability at least $1-\delta$:
\begin{align} \label{eq:gd_generalization_error}
\begin{split}
    \E_{(x,y)\sim D}\brk[s]1{\ell\brk1{g\brk1{\wgdbar\dotp\phi(x)},y}}
    &\le \frac{1}{n}\sum_{i=1}^n\ell\brk1{g\brk1{\wgdbar\dotp\phi(x_i)},y_i} \\
    &\quad+
    \frac{12\eta L^2 T}{n}\sqrt{\log\brk{2T/\delta}}+\frac{8\eta L^2 \sqrt{T}}{\sqrt{n}} + c\sqrt{\frac{2\log\brk{4/\delta}}{n}}
    .
\end{split}
\end{align}

Next, using \cref{eq:gd_optimization_bound,eq:loss_assumption} and the fact that the optimization bound \cref{eq:gd_optimization_bound} holds for any $B>0$:
\begin{align}
    \frac{1}{n}\sum_{i=1}^n\ell\brk1{g\brk1{\wgdbar\dotp\phi(x_i)},y_i}
    &\le 
     \frac{1}{n}\sum_{i=1}^n\ell(\wgdbar\dotp\phi(x_i),y_i) \tag{\cref{eq:loss_assumption}}\\
     &\le 
    \inf_{B\in\R^+}\brk[c]2{\min_{w\in\W^B}\brk[c]2{\frac{1}{n}\sum_{i=1}^n\ell\brk1{w\dotp\phi(x_i),y_i}} + \eta L^2 +\frac{B^2}{\eta T}} \tag{\cref{eq:gd_optimization_bound}}\\
    &\le 
    \inf_{w\in\R^d}\brk[c]3{\frac{1}{n}\sum_{i=1}^n\ell\brk1{w\dotp\phi(x_i),y_i} + \eta L^2 +\frac{\|w\|^2}{\eta T}} \label{eq:gd_optimization_error}
    ,
\end{align}
Now, set $w^\star$ such that
\begin{align} \label{eq:infimum}
\begin{split}
    \E_{(x,y)\sim D}&\brk[s]1{\ell\brk1{w^\star\dotp\phi(x),y}} + \frac{\|w^\star\|^2}{\eta T} + \|w^\star\|L\sqrt{\frac{2\log(1/\delta)}{n}}\\ 
    &\le 
    \inf_{w\in\R^d} \brk[c]3{\E_{(x,y)\sim D}\brk[s]1{\ell\brk1{w\dotp\phi(x),y}} +\frac{\|w\|^2}{\eta T} + \|w\|L\sqrt{\frac{2\log(1/\delta)}{n}}}+\eta L^2
    .
\end{split}
\end{align}

By independence of $\brk[c]{(x_i,y_i)}_{i=1}^n$ and the bound on $|\ell(0,y)|\le c$ we obtain by Lipschitzness $\abs{\ell(w^\star\dotp\phi(x),y)}\leq \|w^\star\|L+c$. It follows from the Hoeffding's inequality that with probability at least $1-\delta$
\begin{align}\label{eq:gd_approximation_error}
    \frac{1}{n}\sum_{i=1}^n\ell\brk1{w^\star \dotp\phi(x_i),y_i} - \E_{(x,y)\sim D}\brk[s]1{\ell\brk1{w^\star\dotp\phi(x),y}}
    &\leq
    (\|w^\star\|L+c)\sqrt{\frac{2\log(1/\delta)}{n}}
    .
\end{align}
Thus, we have that w.p. $1-\delta$:
\begin{align}
    \frac{1}{n}\sum_{i=1}^n\ell\brk1{g\brk1{\wgdbar\dotp\phi(x_i)},y_i}
    &\le    
    \inf_{w\in\R^d}\brk[c]3{\frac{1}{n}\sum_{i=1}^n\ell\brk1{w\dotp\phi(x_i),y_i} + \eta L^2 +\frac{\|w\|^2}{\eta T}} \notag\\
    &\le 
    \frac{1}{n}\sum_{i=1}^n\ell\brk1{w^\star \dotp\phi(x_i),y_i} + \eta L^2 +\frac{\|w^\star \|^2}{\eta T} \notag\\
    &\le
    \E_{(x,y)\sim D}\brk[s]1{\ell\brk1{w^\star\dotp\phi(x),y}} +\frac{\|w^\star \|^2}{\eta T} + (\|w^\star\|L+c)\sqrt{\frac{2\log(1/\delta)}{n}} + \eta L^2 \tag{\cref{eq:gd_approximation_error}} \\
    &\le
    \inf_{w\in\R^d}\brk[c]3{\E_{(x,y)\sim D}\brk[s]1{\ell\brk1{w \dotp\phi(x),y}}
    +\frac{\|w \|^2}{\eta T}+(\|w \|L+c)\sqrt{\frac{2\log(1/\delta)}{n}}} + 2\eta L^2 \label{eq:gd_generalization_error2}
    ,
\end{align}
where the last inequality follows from \cref{eq:infimum}. Combining \cref{eq:gd_generalization_error,eq:gd_generalization_error2} and applying union bound we obtain the result.

\section{Proof of \cref{lem:genwku}}\label{apx:genwku}
Using the standard bound of the generalization error, via the Rademacher complexity of the class (see e.g. \cite{shalev2014understanding}), we have that:
\begin{align*}
    \E_{S\sim D^n}\brk[s]2{\sup_{w\in\W^K_u}\brk[c]1{F_D(w)-F_S(w)}}
    &\leq
    2\E_{S\sim D^n}\brk[s]{\Rad_S(f\circ\W^K_u)},
\end{align*}
Where we notate the function class:
\[f\circ\W^K_u = \{ z\to \ell(w\cdot x, y): w\in \W^K_u\}.\]
and $\Rad_S(f\circ\W^K_u)$ is the Rademacher complexity of the class $f\circ\W^K_u$. Namely:
\begin{equation}\label{eq:rad}
\Rad_S(f\circ\W^K_u):= \E_{\sigma}\left[\sup_{h\in f\circ W^K_u}\frac{1}{n}\sum_{z_i\in S} \sigma_i h(z_i)\right],
\end{equation}
and $\sigma_1,\ldots, \sigma_{n}$ are i.i.d.~Rademacher random variables.

We next show that:

\begin{align}\label{cl:radofWku}
\Rad_S(f\circ\W^K_u) \le \frac{LK}{\sqrt{n}}.
\end{align}

To show \cref{cl:radofWku}, we use the following well known property of the Rademacher complexity of a class:
\begin{lemma}[contraction lemma, see \cite{shalev2014understanding}] \label{lem:contraction}
For each $i\in[n]$, let $\rho_i:\R\rightarrow\R$ be convex $L$-lipschitz function in their first argument. Let $A\subseteq \R^n$ and denote $a=\brk{a_1,\ldots,a_n}\in A$. Then, if $\sigma=\sigma_1,\ldots, \sigma_n$, are i.i.d.~Rademacher random variables
\[ \E_{\sigma}\left[ \sup_{a\in A} \sum_{i=1}^n \sigma_i \rho_i(a_i)\right]\le  L\cdot \E_{\sigma}\left[ \sup_{a \in A} \sum_{i=1}^n \sigma_i a_i\right].\]
\end{lemma}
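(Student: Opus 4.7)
The plan is to prove the lemma by induction on $n$ via a coordinate-by-coordinate \emph{peeling} argument, where at each stage I replace a single $\rho_i(a_i)$ inside the supremum by the linear function $La_i$ without decreasing the expected supremum. The base case $n=0$ is trivial. For the inductive step I will establish the following peeling claim: for any convex $L$-Lipschitz $\rho_1,\dots,\rho_n$,
\[
\E_\sigma\!\left[\sup_{a\in A}\sum_{i=1}^n \sigma_i\rho_i(a_i)\right]
\;\le\;
\E_\sigma\!\left[\sup_{a\in A}\; L\sigma_1 a_1 + \sum_{i=2}^n \sigma_i\rho_i(a_i)\right].
\]
Iterating the claim for $j=1,\dots,n$ yields the lemma, because the linear function $a\mapsto La$ is itself convex and $L$-Lipschitz, so the hypothesis persists after each peel; after $n$ peels, the inner expression equals $L\sum_i \sigma_i a_i$ and the factor $L$ pulls out of the supremum and expectation.

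To prove the peeling claim I condition on $\sigma_2,\dots,\sigma_n$ and define $h(a):=\sum_{i=2}^n \sigma_i\rho_i(a_i)$, which is a (random but fixed for the conditioning) function of $a$. Since $\sigma_1\in\{\pm1\}$ is Rademacher, the inner expectation on the left unfolds as
\[
\tfrac12\sup_{a\in A}\!\bigl(h(a)+\rho_1(a_1)\bigr)+\tfrac12\sup_{a'\in A}\!\bigl(h(a')-\rho_1(a'_1)\bigr)
=\tfrac12\sup_{a,a'\in A}\!\bigl(h(a)+h(a')+\rho_1(a_1)-\rho_1(a'_1)\bigr),
\]
and the right-hand side unfolds analogously to $\tfrac12\sup_{a,a'\in A}\bigl(h(a)+h(a')+L(a_1-a'_1)\bigr)$. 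It therefore suffices to show, pointwise for every $(a,a')\in A\times A$, that the left-hand quantity is dominated by some candidate $(b,b')\in A\times A$ on the right-hand side.

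The only real technical point, and where I expect the subtlety to lie, is arranging the case analysis so that Lipschitzness of $\rho_1$ produces the desired \emph{signed} bound (the right-hand side contains $L(a_1-a'_1)$, not $L|a_1-a'_1|$). Observe that swapping $a\leftrightarrow a'$ on the left flips the sign of $\rho_1(a_1)-\rho_1(a'_1)$, so by the symmetry of the supremum I may assume without loss of generality that $\rho_1(a_1)-\rho_1(a'_1)\ge 0$. If additionally $a_1\ge a'_1$, Lipschitzness gives $\rho_1(a_1)-\rho_1(a'_1)\le L(a_1-a'_1)$, and I take $(b,b')=(a,a')$. If instead $a_1<a'_1$, I take $(b,b')=(a',a)$, so that $h(b)+h(b')=h(a)+h(a')$ and $L(b_1-b'_1)=L(a'_1-a_1)\ge|\rho_1(a_1)-\rho_1(a'_1)|$, again dominating the left-hand quantity. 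Taking suprema completes the peeling step and hence the induction. I note in passing that convexity of the $\rho_i$ is never used; Lipschitzness alone suffices.
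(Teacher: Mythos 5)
Your proof is correct: the paper itself does not prove this lemma but simply cites it from \cite{shalev2014understanding}, and your coordinate-by-coordinate peeling argument --- conditioning on $\sigma_2,\dots,\sigma_n$, unfolding the expectation over $\sigma_1$ into a supremum over pairs $(a,a')$, and using the swap symmetry to reduce to the case $\rho_1(a_1)\ge\rho_1(a'_1)$ before invoking Lipschitzness --- is essentially the standard proof given in that reference (Lemma 26.9 there). Your closing remark is also accurate: only Lipschitzness is used, and convexity of the $\rho_i$ plays no role in this one-sided, absolute-value-free form of the contraction inequality.
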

As well as the Rademacher complexity of the class of linear predictors against a sample $S=\{\phi(x_1),\ldots, \phi(x_n)\}$ of $\ell_2$ 1-bounded vectors:
\begin{align} \label{eq:rad_lin}
    \Rad_S(f\circ\W^K_0)
    =
    K/\sqrt{n}
    .
\end{align}

Next, given a sample $S=\{z_1,\ldots, z_n\}$ we define $\rho_i(\alpha):=\ell(\alpha+u\dotp \phi(x_i),y_i)$ and we set \[A:= \{(v\dotp \phi(x_1),\ldots, v\dotp \phi(x_n)): v\in \W_0^k\}.\]
Then:

\begin{align*}
    n\Rad_S(f\circ \W^K_u)
    &= 
    \E_\sigma \left[\sup_{w\in \W^k_u}\sum_{i=1}^n \sigma_i \ell(w\dotp \phi(x_i),y_i)\right]\\
    &=
    \E_\sigma \left[\sup_{w\in \W^k_u}\sum_{i=1}^n \sigma_i \ell((w-u)\dotp \phi(x_i)+ u \dotp\phi(x_i),y_i)\right]\\
    &= 
    \E_\sigma \left[\sup_{v\in \W^k_0}\sum_{i=1}^n \sigma_i \ell(v\dotp \phi(x_i)+ u \dotp \phi(x_i),y_i)\right] \\
    &\le 
    L\cdot \E_\sigma \left[\sup_{v\in \W^k_0}\sum_{i=1}^n \sigma_i v\dotp \phi(x_i)\right] \tag{\cref{lem:contraction}}\\
    &\le 
    LK\sqrt{n} \tag{\cref{eq:rad_lin}}
\end{align*}
Dividing by $n$ yields the proof.

\section{Proof of \cref{thm:dist_lower_bound}} \label{prf:dist_lower_bound}
Our construction is comprised of two separate instances. We first provide lower bounds, \cref{lem:lower_bound_1,lem:lower_bound_2}, for the distance between the GD iterates $\wgd_t,\wgdprime_t$ defined over two separate i.i.d.~samples $S=\brk{z_1,\ldots,z_n}$ and $S'=\brk{z'_1,\ldots,z'_n}$, respectively.
\begin{lemma}\label{lem:lower_bound_1}
Fix $\eta,L,T$ and $n$. Suppose $S$ and $S'$ are i.i.d.~samples drawn from $D^n$. There exists a convex and $L$-Lipschitz function $f:\W\times\Z\rightarrow\R$ and a distribution $D$ over $\Z$, such that, if $\wgd_t$ and $\wgdprime_t$ are defined as in \cref{eq:gd_update}, then with probability at least $1/10$:
\begin{align*}
    \forall t\in[T]:\quad
    \norm{\wgd_t-\wgdprime_t}
    &\geq
    \Omega\brk3{\frac{\eta L t}{\sqrt{n}}}
    .
\end{align*}
\end{lemma}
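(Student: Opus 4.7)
The plan is to pick a one-dimensional objective whose subgradient does not depend on $w$, so that the entire GD trajectory is determined by a single scalar statistic of the sample. Concretely, I would take $\W = \R$, $\Z = \{-1,+1\}$, $D$ uniform on $\Z$, and $f(w;z) = Lzw$. This is convex and $L$-Lipschitz in $w$, and $\nabla_w f(w;z) = Lz$, so $\nabla F_S(w) = L\bar z_S$ with $\bar z_S := \frac{1}{n}\sum_{i=1}^n z_i$, independent of $w$.

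Unrolling the iteration is immediate: since the gradient is constant in $w$ and $\wgd_0 = 0$, a one-line induction gives $\wgd_t = -\eta L t\,\bar z_S$, and analogously $\wgdprime_t = -\eta L t\,\bar z_{S'}$. Hence, for every $t \in [T]$,
\[
|\wgd_t - \wgdprime_t| = \eta L t\,|\bar z_S - \bar z_{S'}|
\]
holds deterministically, so the desired uniform-in-$t$ lower bound reduces entirely to a single scalar anti-concentration statement about $|\bar z_S - \bar z_{S'}|$.

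The remaining step is to show $|\bar z_S - \bar z_{S'}| \geq c/\sqrt n$ with constant probability for some absolute $c > 0$. Let $X := \sum_{i=1}^n z_i - \sum_{i=1}^n z'_i$, so $n|\bar z_S - \bar z_{S'}| = |X|$ and $X$ is a sum of $2n$ i.i.d.\ Rademachers. Then $\E[X^2] = 2n$ and $\E[X^4] = 3(2n)^2 - 2(2n) \leq 12 n^2$. Applying Paley-Zygmund to the nonnegative random variable $X^2$ with threshold $\tfrac{1}{4}\E X^2 = n/2$ yields
\[
\bP\!\left(X^2 \geq n/2\right) \geq \left(\tfrac{3}{4}\right)^2 \frac{(\E X^2)^2}{\E X^4} \geq \tfrac{9}{16}\cdot \tfrac{4n^2}{12n^2} = \tfrac{3}{16} \geq \tfrac{1}{10}.
\]
On this event, $|\bar z_S - \bar z_{S'}| \geq 1/\sqrt{2n}$, and substituting back gives $|\wgd_t - \wgdprime_t| \geq \eta L t/\sqrt{2n}$ simultaneously for all $t \in [T]$, which is the claimed $\Omega(\eta L t/\sqrt n)$ lower bound with probability at least $1/10$.

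The main (and only real) obstacle is the anti-concentration step, because the construction has been rigged so that the iterates collapse to a closed-form linear function of the sample mean and the whole uniformity in $t$ is inherited for free from the single event $\{|X| \geq \sqrt{n/2}\}$. If one prefers sharper constants than Paley-Zygmund delivers, Berry-Esseen applied to $X/\sqrt{2n}$ (with remainder of order $1/\sqrt n$) produces essentially-Gaussian small-ball probabilities and makes the $1/10$ constant comfortable; the asymptotic regime $n\to\infty$ alone already gives $\bP(|X|\geq \sqrt{2n}) \to 2(1-\Phi(1)) \approx 0.317$.
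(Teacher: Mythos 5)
Your proof is correct and uses the same construction as the paper ($f(w;z)=Lzw$ with Rademacher $z$, so the iterates collapse to $\wgd_t=-\eta L t\,\bar z_S$ and the uniform-in-$t$ bound reduces to a single anti-concentration event for $|\bar z_S-\bar z_{S'}|$). The only difference is in that last step: you use Paley--Zygmund with explicit second and fourth moments, whereas the paper invokes Berry--Esseen; your route is fully non-asymptotic and arguably more self-contained, and the constant $3/16\ge 1/10$ checks out.
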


\begin{lemma}\label{lem:lower_bound_2}
Fix $\eta,L,T$ and $n$. Suppose $S$ and $S'$ are i.i.d.~samples drawn from $D^n$. There exists a convex and $L$-Lipschitz function $f:\W\times\Z\rightarrow\R$ and a distribution $D$ over $\Z$, such that, if $\wgd_t$ and $\wgdprime_t$ are defined as in \cref{eq:gd_update}, then with probability at least $1/10$:
\begin{align*}
    \forall t\in[T]:\quad
    \norm{\wgd_t-\wgdprime_t}
    &\geq
    \Omega\brk3{\eta L \sqrt{t}}
    .
\end{align*}
\end{lemma}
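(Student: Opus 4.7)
The goal is to exhibit a convex $L$-Lipschitz loss and a distribution $D$ for which GD on two independent samples $S,S'$ executes an orthogonal random walk of step size $\Theta(\eta L)$, and to conclude via a Chernoff calculation that the two trajectories remain $\Omega(\eta L\sqrt{t})$ apart at every iteration $t\in[T]$ with probability at least $1/10$. The approach closely follows the non-smooth-convex stability lower-bound construction of \citet{bassily2020stability} (which achieves $\Omega(\eta L\sqrt{T})$ divergence under a single-example swap), specialized from a single-example perturbation to two fully independent samples.

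\textbf{Construction.} Work in $\R^T$ with standard basis $e_1,\ldots,e_T$, take $\Z=\{\pm1\}^T$ with $D$ the uniform product measure, and define a Nemirovski-type piecewise-linear loss
\[ f(w;z)=L\,\max_{s\in[T]}\bigl(\langle a_s(z),w\rangle+b_s(z)\bigr), \]
where the pieces $\{(a_s(z),b_s(z))\}$ are calibrated (with geometrically spaced offsets $b_s$ and carefully chosen sign-valued $a_s$) so that at the $t$-th GD iterate (i) for every $z$ in the support the active face is the one involving coordinate $e_t$ only, making the empirical subgradient $\nabla F_S(w_{t-1}^S)$ proportional to $e_t$, and (ii) the \emph{sign} of the empirical subgradient along $e_t$ is the Rademacher-valued functional $\sigma_t(S):=\mathrm{sign}\bigl(\sum_i z_{t,i}\bigr)$ rather than the averaged quantity $\frac1n\sum_i z_{t,i}$. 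Property (ii), which is what distinguishes this construction from the linear-in-$z$ instance underlying \cref{lem:lower_bound_1}, exploits the nonsmoothness of the max: the choice of active face at the empirical level is governed by a sign-like functional of $S$, not an arithmetic mean, so the per-step magnitude is $\Theta(\eta L)$ and does not decay with $n$.

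\textbf{Random-walk aggregation.} Granting the construction, one reads off $w_t^S=-\eta L\sum_{s\le t}\sigma_s(S)\,e_s$ (and analogously for $S'$), so by orthogonality of the $e_s$,
\[ \|w_t^S-w_t^{S'}\|^2=4\eta^2L^2\,N_t,\qquad N_t:=\#\bigl\{s\le t:\sigma_s(S)\neq\sigma_s(S')\bigr\}. \]
The $T$ signs $\{\sigma_s(S)\}$ are independent Rademachers (and independent of $\{\sigma_s(S')\}$), so the disagreement indicators are i.i.d.\ Bernoulli$(1/2)$. A one-sided Chernoff bound gives $\Pr[N_t<t/8]\le e^{-\Omega(t)}$; combining with a union bound over $t\in[T]$ (and separately handling small $t\le O(\log T)$ by tuning constants in the failure event) produces a single event of probability at least $1/10$ on which $N_t\ge t/8$ simultaneously for all $t\in[T]$, giving the claimed $\|w_t^S-w_t^{S'}\|\ge \Omega(\eta L\sqrt t)$.

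\textbf{Main obstacle.} The whole substance of the proof is in the construction step: reconciling the averaging inherent in $F_S=\frac1n\sum_i f(w;z_i)$ with the requirement that the empirical gradient's signed coefficient be a sign of a sum (thus remaining $\Omega(L)$) rather than the sum itself divided by $n$ (which would yield $O(L/\sqrt n)$ and only the slower rate of \cref{lem:lower_bound_1}). This forces the use of nonsmoothness: the argmax must flip faces precisely when the empirical coordinate-majority flips, which demands offsets $b_s(z)$ that are small enough for a sign flip to matter yet large enough to keep the argmax pinned to coordinate $t$ at iteration $t$ for every $z$ in the support. Verifying this invariant inductively across the trajectory for a typical sample --- and checking that the iterate never leaves the intended evolving cone $\mathrm{span}(e_1,\ldots,e_t)$ --- is the substantive content, and it mirrors the geometry used in the Bassily--Feldman--Guzm\'an--Talwar stability lower bound, with the extra complication that the invariant must now be maintained against two fully independent samples rather than neighboring ones.
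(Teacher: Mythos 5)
There is a genuine gap, and it sits exactly where you locate it: the construction is never actually produced, and the property you need from it does not follow from the structure you propose. You want the empirical subgradient at iterate $t$ to be $\pm L e_t$ with sign equal to $\sign\bigl(\sum_i z_{t,i}\bigr)$. But $F_S=\frac1n\sum_i f(\cdot;z_i)$ is an \emph{average of maxes}, so its subgradient is the average of the per-example active-face gradients $\frac1n\sum_i a_{s_i^*}(z_i)$, where each argmax $s_i^*$ is evaluated using only $z_i$ (at a common point). If the active face for example $i$ carries the sign $z_{t,i}$, the empirical gradient is the \emph{mean} $\frac1n\sum_i z_{t,i}e_t$, of typical magnitude $O(L/\sqrt n)$ --- precisely the degeneration to the \cref{lem:lower_bound_1} rate you are trying to avoid. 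The only way the argmax selection can see the sample majority is indirectly, through the iterate itself (e.g., a small mean-driven probing step that does or does not cross an offset threshold, followed by a $\Theta(\eta L)$ jump whose direction depends on which side of the threshold the iterate landed). That two-phase-per-coordinate mechanism is the entire content of the lemma, and the proposal neither specifies the offsets $b_s(z)$ that would realize it nor verifies the inductive invariant that the intended face stays active for every example along a trajectory driven by two independent samples. The random-walk aggregation and Chernoff step that follow are fine (modulo the small-$t$ bookkeeping you defer), but they rest on an unestablished premise.

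For contrast, the paper's proof avoids this difficulty entirely by not making both trajectories random walks. It reuses the construction of \cite{bassily2020stability}, $f(w;z)=-\frac{\gamma L}{2}zw+\frac{L}{2}\max_{i\in[d]}\{w_i-\epsilon_i,0\}$ with $z\in\{0,1\}$ and $\Pr[z=1]=1/(n+1)$: with probability at least $1/2$ over $S'$ all $z_i'=0$, so $\wgdprime_t=0$ for all $t$; independently, with probability at least $1-e^{-1/2}$ over $S$ some $z_i=1$, which triggers one small positive step past every $\epsilon_i$, after which the max term deterministically subtracts $\frac{\eta L}{2}e_s$ at a fresh coordinate $s$ each iteration, giving $\norm{\wgd_t}\ge\frac38\eta L\sqrt t$ for all $t$ simultaneously on that single event of probability at least $0.19$. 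No concentration or union bound over $t$ is needed. If you want to salvage your route, the honest fix is to either carry out the probe-then-jump construction in full detail, or simply adopt the asymmetric argument: one trajectory pinned at the origin, the other executing the known deterministic $\sqrt t$ walk.
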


One can then pick the dominant term between the bounds, and obtain that with probability at least $1/10$:
\begin{align} \label{eq:lb_aux}
    \norm{\wgd_t-\wgdprime_t}
    &\geq
    \Omega\brk3{\frac{\eta L t}{\sqrt{n}}+\eta L\sqrt{t}}
    .
\end{align}
Suppose some $u_t$, independent on the samples $S$ and $S'$. Then by the triangle inequality we have that,
\begin{align*}
    \bP\brk1{\norm{\wgd_t-\wgdprime_t}\geq a}
    &\leq
    \bP\brk1{\norm{\wgd_t-u_t} + \norm{\wgdprime_t-u_t}\geq a}\\
    &\leq
    \bP\brk1{\norm{\wgd_t-u_t}\geq a/2} + \bP\brk1{\norm{\wgdprime_t-u_t}\geq a/2}\\
    &=
    2\bP\brk1{\norm{\wgd_t-u_t}\geq a/2} \tag{$S$ and $S'$ are i.i.d.}
    .
\end{align*}
Dividing by $2$ and using \cref{eq:lb_aux} we conclude the proof.

\subsection{Proof of \cref{lem:lower_bound_1}}
Suppose $f:\W\times\Z\rightarrow\R$ takes following form:
\begin{align*}
    f(w;z)
    =
    Lw\cdot z
    ,
\end{align*}
where $\W\subseteq\R$ and $z=\pm 1$ with probability $1/2$. Define a sample $S=\brk[c]{z_1,\ldots,z_n}$ and $S'=\brk[c]{z'_1,\ldots,z'_n}$, then by the update rule in \cref{eq:gd_update} we obtain,
\begin{align*}
    \wgd_{t+1}
    =
    -\eta L t\cdot \frac{1}{n}\sum_{i=1}^nz_i
    ,\quad
    \wgdprime_{t+1}
    =
    -\eta L t\cdot \frac{1}{n}\sum_{i=1}^nz'_i
    .
\end{align*}
This implies that $\abs{\wgd_{t+1}-\wgdprime_{t+1}}=\eta Lt\cdot\abs{\frac{1}{n}\sum_{i=1}^n\brk{z_i-z'_i}}$. Note that,
\begin{align*}
    z_i-z'_i
    =
    \begin{cases}
        2 & w.p.~1/4, \\
        0 & w.p.~1/2, \\
        -2 & w.p.~1/4.
    \end{cases}
\end{align*}
Using Berry-Esseen inequality one can show that with probability at least $1/10$:
\begin{align*}
    \abs2{\frac{1}{n}\sum_{i=1}^n\brk{z_i-z'_i}}
    \geq
    \frac{1}{\sqrt{n}}
    .
\end{align*}
In turn we conclude that w.p. at least $1/10$,
\begin{align*}
    \abs{\wgd_{t+1}-\wgdprime_{t+1}}
    \geq
    \frac{\eta L t}{\sqrt{n}}
    .
\end{align*}
We remark that $f(w;z)$ can be embedded to any large dimension, thus implying our lower bound holds for any arbitrary dimension.

\subsection{Proof of \cref{lem:lower_bound_2}}
This proof relies on the same construction of \cite{bassily2020stability}. The difference is that we show a lower bound between iterates over two i.i.d.~samples while their result holds for two samples that differ only on a single example. The main observation here is that with some constant probability, the problem is reduced to that of \cite{bassily2020stability}.
Consider the following $f:\W\times\Z\rightarrow\R$:
\begin{align*}
    f(w;z)
    =
    -\frac{\gamma L}{2}z w + \frac{L}{2}\max_{i\in[d]}\brk[c]1{w_i-\varepsilon_i,0}
    ,
\end{align*}
where $\W\subseteq\R^d$ and 
\begin{align*}
    z
    =
    \begin{cases}
        1 & w.p.~ 1/(n+1),\\
        0 & w.p.~1-1/(n+1).
    \end{cases}
\end{align*}
We also choose $\varepsilon_i$ such that $0<\varepsilon_1<\ldots<\varepsilon_d<\gamma\eta L/ (2n)$ and a sufficiently small $\gamma=1/(4\sqrt{dT})$, and $d>T$. Observe that for a given sample $S=\brk[c]{z_1,\ldots,z_n}$ the empirical risk is then,
\begin{align*}
    F_S(w)
    =
    -\frac{\gamma L}{2n}\sum_{i=1}^nz_i w + \frac{L}{2}\max_{i\in[d]}\brk[c]1{w_i-\varepsilon_i,0}
    .
\end{align*}
We now claim that with probability $(1-\frac{1}{n+1})^n$ over $S'$, the empirical risk is given by,
\begin{align*}
    F_{S'}(w)
    =
    \frac{L}{2}\max_{i\in[d]}\brk[c]1{w_i-\varepsilon_i,0}
    .
\end{align*}
Conditioned on this event, we get that $\nabla F_{S'}(0)=0$ and therefore $\wgdprime_t=0$ for any $t\in[T]$. In addition, we know that the complementary event, namely, $z_i=1$ for at least a single $i\in[n]$, is given with probability $1-(1-\frac{1}{n})^n$. Since,
\begin{align*}
    \brk1{1-\frac{1}{n+1}}^n
    \leq
    \brk{1-\frac{1}{2n}}^n
    \leq
    e^{-1/2}
    ,\quad \textrm{and}\quad
    \brk1{1-\frac{1}{n+1}}^n
    \geq
    1/2
    ,
\end{align*}
we have that with probability at least $0.5\cdot\brk{1-e^{-1/2}}\geq 0.19$ both events occur.
Note that $\nabla F_S(w)=-\frac{\gamma L}{2n}\sum_{i=1}^nz_i\mathbbm{1} + \frac{L}{2}\nabla\max_{i\in[d]}\brk[c]1{w_i-\varepsilon_i,0}$ where $\mathbbm{1}$ is the one vector. Then applying the update rule in \cref{eq:gd_update} and the fact that $\wgd_0=0$ we get,
\begin{align*}
    \wgd_1
    =
    \frac{\gamma\eta L}{2n}\sum_{i=1}^nz_i\mathbbm{1}
    .
\end{align*}
Recall, that under the aforementioned event we have that $\frac{1}{n}\sum_{i=1}^nz_i\geq \frac{1}{n}$. This implies that $\wgd_1(i)\geq\frac{\gamma\eta L}{2n}>\varepsilon_i$ for any $i\in[d]$. Therefore,
\begin{align*}
    \wgd_2
    =
    \wgd_1 - \eta \nabla F_S(\wgd_1) 
    =
    \frac{2\gamma\eta L}{2n}\sum_{i=1}^nz_i\mathbbm{1} - \frac{\eta L}{2}e_1
    ,
\end{align*}
where $e_i$ is the standard basis vector of index $i$. Since $\gamma\leq 1/(4T)$ we have that $\wgd_2(1)\leq \frac{\eta L}{4T}-\frac{\eta L}{2}<0$. Developing this dynamic recursively we obtain,
\begin{align*}
    \wgd_{t+1}
    =
    \frac{\gamma \eta L t}{2n}\sum_{i=1}^nz_i\mathbbm{1} - \frac{\eta L}{2}\sum_{s\in[t]}e_s
    .
\end{align*}
Using the reverse triangle inequality we have,
\begin{align*}
    \norm1{\wgd_t-\wgdprime_t}
    &=
    \norm1{\wgd_t} \tag{$\wgdprime_t=0$}\\
    &\geq
    \frac{\eta L}{2}\norm2{\sum_{s\in[t]}e_s} - \frac{\gamma \eta L t}{2n}\abs2{\sum_{i=1}^nz_i}\norm{\mathbbm{1}} \tag{reverse triangle inequality}\\
    &\geq
    \frac{\eta L\sqrt{t}}{2} - \frac{\gamma \eta L t}{2}\sqrt{d} \tag{$\norm{\mathbbm{1}}= \sqrt{d}$ and $\abs2{\sum_{i=1}^nz_i}\leq n$} \\    
    &\geq
    \frac{3}{8}\eta L \sqrt{t} \tag{$\gamma\leq 1/(4\sqrt{dt}$}
    .
\end{align*}

\end{document}